\def\thanks#1{\protected@xdef\@thanks{\@thanks\protect\footnotetext{#1}}}
\title{Towards Causal Foundation Model: on Duality between Causal Inference and Attention}
\author[2,*]{Jiaqi Zhang}
\author[1]{Joel Jennings}
\author[1]{Agrin Hilmkil}
\author[1]{Nick Pawlowski}
\author[1]{Cheng Zhang}
\author[1,*]{Chao Ma}
\date{September 29, 2023}
\affil[1]{Microsoft Research Cambridge}
\affil[2]{Massachusetts Institute of Technology}
\affil[*]{Equal contributions\thanks{
Correspondence to: Jiaqi Zhang
$<$\texttt{viczhang@mit.edu}$>$, Chao Ma $<$\texttt{chaoma@microsoft.com}$>$. This work is done when Jiaqi Zhang was an intern at Microsoft Research.}}
\begin{document}
\maketitle

% \addtocontents{toc}{\protect\setcounter{tocdepth}{0}}

\begin{abstract}
Foundation models have brought changes to the landscape of machine learning, demonstrating sparks of human-level intelligence across a diverse array of tasks. However, a gap persists in complex tasks such as causal inference, primarily due to challenges associated with intricate reasoning steps and high numerical precision requirements. In this work, we take a first step towards building causally-aware foundation models for treatment effect estimations. We propose a novel, theoretically justified method called Causal Inference with Attention (CInA), which utilizes multiple unlabeled datasets to perform self-supervised causal learning, and subsequently enables zero-shot causal inference on unseen tasks with new data. This is based on our theoretical results that demonstrate the primal-dual connection between optimal covariate balancing and self-attention, facilitating zero-shot causal inference through the final layer of a trained transformer-type architecture. We demonstrate empirically that CInA effectively generalizes to out-of-distribution datasets and various real-world datasets, matching or even surpassing traditional per-dataset methodologies. These results provide compelling evidence that our method has the potential to serve as a stepping stone for the development of causal foundation models.
\end{abstract}

\section{Introduction}
Recent advances in artificial intelligence have created a paradigm shift in which models are trained on large amounts of data and can be adapted to different tasks, dubbed \textit{foundation models} \cite{bommasani2021opportunities}. These models, which often employ self-supervision, can extract valuable knowledge from various types of data, 
including natural language \cite{devlin2018bert,brown2020language}, images \cite{radford2021learning}, and biological sequencing counts \cite{theodoris2023transfer}. This acquired knowledge allows the model to generalize when asked to perform tasks in novel scenarios. With vast amounts of data becoming increasingly available from diverse sources, such models are of interest to leverage information that can be learned in order to build more intelligent systems \cite{bubeck2023sparks}.

A critical aspect of intelligent systems is the ability to reason about cause-and-effect relationships, which is vital to making informed decisions across various domains, including healthcare, economics, and statistics \cite{harrison1984decision,kube2019allocating,geffner2022deep,zhang2022active}.
{There have been significant debates regarding whether current foundation models acquire the ability to reason about causality \cite{kiciman2023causal,zevcevic2023causal}.
However, it was observed that existing foundation models have difficulties with causal tasks that involve intricate reasoning or high numerical precision \cite{bubeck2023sparks, mahowald2023dissociating,wolfram2023wolfram,zevcevic2023causal,jin2023can}, such as treatment effect estimations.
Furthermore, performance may decline when tested on datasets that were not part of the training set \cite{feder2022causal}.
Motivated by this shortcoming}, it is crucial to build \textit{causally-aware foundation models} {(see Appendix~\ref{app:definition-foundation} for a definition)} capable of extracting causal information and performing causal inference at scale, harnessing the vast amounts of data available from diverse sources.

However, creating a suitable self-supervised learning paradigm for causal foundation models with theoretical guarantees remains an open question. Unlike existing foundational models for natural language and vision (e.g., \cite{devlin2018bert,radford2021learning}), causal foundation models generally lacks clearly defined supervised signals since most available machine learning datasets only contain observational data without intervention, rendering key causal quantities, such as treatment effects, unknown. On top of this, common datasets used in the causality community contain complex relationships between variables that might be heterogeneous across dataset sources. These less-structured heterogeneous relationships make it harder for the model to capture compared to linguistic or perceptual patterns. 

\textbf{Contributions.} In this paper, we take a \emph{first step} towards building causal foundation models, focusing on {estimating average treatment effects with greater generalizability.}
% causal inference tasks. 
One of our primary contributions is a theoretically justified method, dubbed \textbf{C}ausal \textbf{In}ference with \textbf{A}ttention (CInA), that leverages multiple unlabeled observational datasets to learn how to estimate treatment effects on various tasks, and then generalize to perform zero-shot causal inference on unseen tasks with new data.
\begin{itemize}
    % \vspace{-5pt}
    \item  We theoretically establish the equivalence between optimal covariate balancing and (regularized) self-attention through a primal-dual argument. We prove that with an appropriate self-supervised loss, a trained self-attention is guaranteed to find the optimal balancing weights for any given dataset under certain regularity conditions. This serves as the theoretical foundation that enables zero-shot causal inference on unseen data.

    \item Based on our theoretical results, we propose a gradient-based, transformer-type practical algorithm for zero-shot causal inference. In particular, this model uses covariate balancing as self-supervised tasks. Once trained on multiple data sources, it performs zero-shot causal inference by simply extracting the key-value tensors from the last layer of the model during a forward pass on new data. This stands in contrast to traditional per-dataset causal inference, which needs to re-fit and re-optimize on new data.
    
    \item Empirically, we verify the correctness of our theory and demonstrate the effectiveness of our algorithm on both synthetic and real-world datasets.  Importantly, in the context of zero-shot causal inference on unseen datasets, we observed competitive and in-certain-cases better performance to traditional per-dataset causal inference approaches, while achieving substantial reductions in inference time.
    % \vspace{-5pt}
\end{itemize}

%These results show evidence that the proposed method can serve as a \emph{first stepping stone} in the development of causally-aware foundation models. 

While the current work concentrates on estimating treatment effects, it provides a new approach for addressing diverse causal inference challenges, via effective in-context generalization. These results show evidence that the proposed method can serve as a \emph{first stepping stone} in the development of causally-aware foundation models that can tackle a wide spectrum of causal tasks.

\textbf{Organization.} In Section~\ref{sec:2}, we discuss related works. In Section~\ref{sec:3}, we state our theoretical results and provide the derivation of our algorithm, which serves as a proof sketch. We use these results to derive our methods for zero-shot causal inference in Section~\ref{sec:4}. In Section~\ref{sec:5}, we perform empirical studies of our proposed algorithms on both synthetic and real-world datasets. We conclude and discuss future directions and limitations in Section~\ref{sec:6}.

\section{Related Works}\label{sec:2}

\textbf{Causal Inference via Optimal Balancing.} Our work concerns problems in causal inference, assuming that we are provided with either the causal structure \cite{pearl2009causal} or certain independence conditions between variables that imply structural relationships \cite{imbens2015causal}. In particular, we focus on estimation problems, e.g., estimating average treatment effect (ATE) and policy evaluation. {See~Section~\ref{sec:3-1} for a detailed problem formulation.} Under certain assumptions, one of the most common methods is to use weighted (e.g., \cite{li2018balancing}) or doubly robust estimators (e.g., \cite{dudik2011doubly}). Numerous weighted estimators have been proposed to optimize covariate balance (e.g., \cite{hainmueller2012entropy,imai2014covariate}). Our work extends this line of research by introducing an optimal balancing approach that relies on training a transformer-type model, which is the main architecture used by existing foundation models \cite{bommasani2021opportunities}. 
% {We discuss related neural estimation method in Appendix~\ref{app:extend-relate-work}.} 

It is worth noting that we also differ from prior work by considering multiple datasets simultaneously, where we show that our proposed method can be generalized to produce estimands on a new dataset in a zero-shot manner.

\textbf{Neural Estimation Methods for Treatment Effects.} Research in this direction employs deep learning methods to estimate treatment effects, typically relying on standard assumptions that ensure identifiability, similar to our setting.
A prominent approach focuses on learning a representation of the covariates that is predictive of the outcome \cite{johansson2016learning,shalit2017estimating,yao2018representation}. 
Following this, several methods have been proposed to combine outcome models learned through neural networks with balanced propensity weights  \cite{alaa2017deep,schwab2018perfect,du2021adversarial}.
Semi-parameteric estimation theory and doubly robust estimators have also been applied in neural estimation methods, e.g., using regularization \cite{shi2019adapting} or shared representations \cite{chernozhukov2018double}.
Another perspective of using neural network is to control for complex relationships and covariates. 
\cite{kallus2020deepmatch} extends adversarial covariate balancing \cite{kallus2020generalized} using flexible modeling with neural networks.
Generative causal models have also been proposed to leverage the expressivity of neural networks to approximate structural causal models \cite{louizos2017causal,kocaoglu2017causalgan,alaa2017bayesian,yoon2018ganite,pawlowski2020deep,xia2021causal,xia2022neural}, which then allows for the estimation of treatment effects. 
In addition, \cite{xia2021causal} also proved that their proposed method can be used to test the identifiability of causal effect in terms of do-interventions \cite{pearl2009causal} in the general setting. \cite{xia2022neural} extended such testing for counterfactual outcomes \cite{bareinboim2022pearl}. 
In \cite{melnychuk2022causal}, the attention mechanism was employed to estimate treatment effect over time for a given unit.
Concurrent to our work, \cite{nilforoshan2023zero} proposed a meta-learning framework to learn causal effects of various structured treatments on the same population. Their method leverages information across different treatments, which allows for zero-shot learning on an unseen treatment.
Our work can be viewed as orthogonal, as we focus on learning the causal effects of the same treatment across different populations.

{\textbf{Causal Reasoning with Large Language Models (LLMs).}} A prominent example of foundation models are LLMs \cite{brown2020language,openai2023gpt4}. Due to their remarkable performance across various tasks, prior works have explored and exploited their capabilities in addressing causal inquiries. For example, \cite{zhang2023understanding} assessed the ability of LLMs for three types of causal questions: identifying causal relationships using existing domain knowledge, discovering new knowledge from data, and estimating quantitative treatment effects. They found that LLMs perform well on the first question but are not yet to provide satisfactory answers for the others. Similar limitations with formal reasoning have also been noted in \cite{bubeck2023sparks,mahowald2023dissociating,wolfram2023wolfram}. 
{When probing LLMs, \cite{li2022emergent,park2023linear} found evidence of emergent representations that are helpful for causal predictions.}
However, it was observed that for causal discovery, LLMs are not yet stable \cite{kiciman2023causal} and might produce different answers to the same question in two separate queries \cite{tu2023causal}. To enhance LLMs for causal tasks, \cite{ban2023query} proposed to integrate LLM outputs with constraint-based methods. 

{In this paper, we take a different path towards causally-aware foundation models; namely, we explore the fundamentals of constructing these models from scratch to address questions on a larger scale and with greater generalizability than current statistical tools.
It is important to note that, apart from utilizing the attention architecture, this work has no further connection with LLMs.
}

\section{Establishing Duality Between Causality and Attention}\label{sec:3}
% \textcolor{blue}{Say something here and in the intro, that this section walks through the main rational of the alg and shows a proof sketch of the correctness of the alg.}

We present our main theoretical result on the primal-dual connection between covariate balancing and self-attention, which enables us to estimate treatment effects via transformer-type architectures. In particular, in Section~\ref{sec:3-1}, we describe the adversarial optimal balancing formulation of causality and show how optimal balancing can be viewed as a specific dual support vector machine (SVM) problem. Then, in Section~\ref{sec:3-2}, we establish the equivalence between the SVM expansion and self-attention. Detailed derivations of this section can be found in Appendix~\ref{app:omit-proof}.

% In this section, we prove the correctness of our proposed method for causal inference. We demonstrate this via estimating sample average treatment effect. In Section~\ref{sec:3-1}, we provide the problem formulation and the assumptions. We then lay out the two key components of the proof in Section~\ref{sec:3-2} and \ref{sec:3-3}. In particular, we connect optimal balancing to dual SVM in Section~\ref{sec:3-2} and rewrite self-attention as support vector expansion in Section~\ref{sec:3-3}. The detailed derivation and proofs of this section can be found in Appendix~\ref{app:omit-proof}.
 
\subsection{Adversarial Covariate Balancing as Dual SVM}\label{sec:3-1}
To illustrate our approach, we focus on the task of average treatment effect estimation. In Appendix~\ref{app:counterfactual}, we extend our method to other estimands, such as {individual treatment effect} and policy evaluation. Consider a dataset of $N$ units $\sD=\{(\vX_i, T_i, Y_i)\}_{i\in [N]}$, where $\vX_i$ is the observed covariates, $T_i$ is the observed treatment, and $Y_i$ is the observed outcome. Suppose $T_i\in\{0,1\}$ for now; Appendix~\ref{app:non-binary-treatment} generalizes these results for non-binary treatments. Let $Y_i(t)$ be the potential outcome of assigning treatment $T_i=t$. The sample average treatment effect is defined as $\tau_{SATE}=\frac1N\sum_{i=1}^N\big(Y_i(1)-Y_i(0)\big)$.

Assume $Y_i=Y_i(T_i)$, i.e., consistency between observed and potential outcomes and non-interference between units \cite{rubin1990comment}, and $Y_i(0),Y_i(1)\CI T_i\mid \vX_i$, i.e., no latent confounders. We consider weighted estimators in the form of
\[
\hat{\tau} = \sum_{i\in\sT} \alpha_i Y_i(1) - \sum_{i\in\sC} \alpha_i Y_i(0),
\]
where $\sT = \{i\in[N]:T_i=1\}$ is the treated group and $\sC = \{i\in[N]: T_i=0\}$ is the control group. We force constraints on the weight by allowing $\valpha\in\sA=\{\vzero\preceq\valpha\preceq\vone,~ \sum_{i\in\sT}\alpha_i=\sum_{i\in\sC}\alpha_i=1\}$. These constraints help with obtaining robust estimators. For example, $\sum_{i\in\sT}\alpha_i=1$ ensures that the bias remains unchanged if we add a constant to the outcome model of the treated, whereas $\sum_{i\in\sC}\alpha_i=1$ further ensures that the bias remains unchanged if we add the same constant to the outcome model of the control.

A good estimator should minimize the absolute value of the conditional bias that can be written as
\begin{align*}
    \E\left(\hat{\tau}-\tau_{SATE}\mid \{\vX_i, T_i\}_{i=1}^N\right) = \sum_{i=1}^N \alpha_i W_i\E\left(Y_i(0)\mid \vX_i\right)+\sum_{i=1}^N(\alpha_i T_i -\frac1N)\E\left(Y_i(1)-Y_i(0)\mid \vX_i\right),
\end{align*}
where we denote $W_i=1$ if $i\in\sT$ and $W_i=-1$ if $i\in\sC$. As the outcome models are typically unknown {\cite{holland1986statistics}}, we follow previous works \cite{tarr2021estimating, kallus2020generalized} by minimizing an upper bound on the square of the first term.\footnote{In Appendix~\ref{app:alternative-obj}, we show how our method can generalize to alternative balancing objectives, e.g., the square of both terms in the conditional bias and the conditional mean square error.} 
Namely, assuming the outcome model $\E(Y_i(0)\mid \vX_i)$ belongs to a hypothesis class $\gF$, we solve for $\min_{\valpha\in\sA}\sup_{f\in \gF} \big(\sum_{i=1}^N \alpha_i W_if(\vX_i)\big)^2$. To simplify this, consider $\gF$ being a unit-ball reproducing kernel Hilbert space (RKHS) defined by some feature map $\phi$. {In other words, it can be written as $\mathcal{F}=\{f:\mathbb{X}\to\mathbb{R}\mid \exists \theta\in\mathcal{H}, ~\|\theta\|\leq 1, s.t.~f(x)=\langle\theta,\phi(x)\rangle, \forall x\in\mathbb{X}\}$. Here $\gH$ is the Hilbert space that contains the image of $\phi$ and is equipped with inner product $\langle\cdot,\cdot\rangle$ and norm $\|\cdot\|$. Note that in the rest of the paper, we will not explicitly define $\phi$, but only demonstrate its existence in the context of self-attention (Section~\ref{sec:3-2}).}
Then the supremum can be computed in closed form, which reduces the optimization problem to
\begin{equation}\label{eq:kom}
    \min_{\valpha\in\sA} \valpha^\top \mK_\phi \valpha,
\end{equation}
where $[\mK_\phi]_{ij}=W_iW_j\langle \phi(\vX_i),\phi(\vX_j)\rangle$. Here $\langle\cdot,\cdot\rangle$ denotes the inner product of the Hilbert space to which $\phi$ projects. This is equivalent to solving the following dual SVM problem for some $\lambda\geq0$ (Theorem 1 in \cite{tarr2021estimating}),
\begin{equation}\label{eq:dual-svm}
    \begin{aligned}
    \min_\valpha&\quad \valpha^\top \mK_\phi\valpha - 2\lambda\cdot\vone^\top\valpha, \\
    s.t.&\quad \vW^\top\valpha = 0,\quad \vzero\preceq\valpha\preceq\vone.
    \end{aligned}
\end{equation}
In other words, the optimal solution $\valpha^*$ to Eq.~(\ref{eq:dual-svm}) solves Eq.~(\ref{eq:kom}). Thus we can obtain the optimal balancing weight by solving the dual SVM. 
% Detailed derivations can be found in Appendix~\ref{app:omit-proof}. 
For the choice of the RKHS, we will see in the next section that the feature function $\phi$ is also learned from data.

\subsection{Self-attention as Support Vector Expansion}\label{sec:3-2}
% \textclor{blue}{Either here or later or in intro, figure illustrate attending to different samples}

\textbf{SVM to Self-attention.} The dual SVM problem for covariate balancing (Eq.~(\ref{eq:dual-svm})) has the following primal form:
\begin{equation}\label{eq:primal-svm}
    \begin{aligned}
    \min_{\vbeta, \beta_0, \vxi}\quad & \frac{\lambda}{2}\|\vbeta\|^2 + \sum_{i=1}^N \xi_i,\\
    s.t.\quad & W_i\left(\big\langle\vbeta, \phi(\vX_i)\big\rangle + \beta_0\right)\geq 1-\xi_i,\\
    & \xi_i\geq 0,\quad\forall i\in [N].
    \end{aligned}
\end{equation}
Intuitively, this optimization problem aims to classify the treatment assignment $W_i$ using a linear transformation of the feature vector $\phi(\vX_i)$. 

We can connect the primal solution to the dual coeffcients $\valpha^*$ by the Karush-Kuhn-Tucker (KKT) condition \cite{boyd2004convex}. The optimal $\vbeta^*$ that solves Eq.~(\ref{eq:primal-svm}) should satisfy $\lambda\vbeta^* = \sum_{j=1}^N\alpha_j^*W_j\phi(\vX_j)$. Thus if $\lambda>0$, the optimal classifer will have the following support vector expansion
\begin{equation}\label{eq:sv-expansion}
    \langle \vbeta^*,\phi(\vX_i)\rangle = \sum_{j=1}^N({\alpha_j^* W_j}/{\lambda})\cdot \langle \phi(\vX_j), \phi(\vX_i)\rangle.
\end{equation}
Note that we drop the constant intercept for simplicity. Next we show how Eq.~(\ref{eq:sv-expansion}) relates to self-attention.

\begin{figure}[t]
% \vspace*{-20pt}
    \centering
    \includegraphics[width=0.35\textwidth]{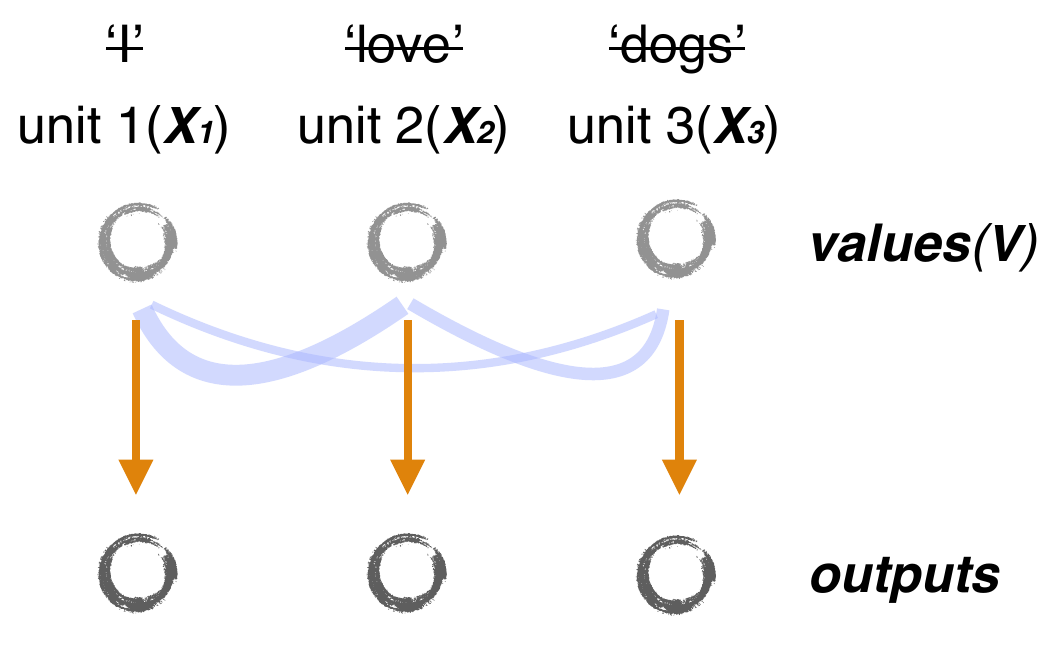}
    % \vspace*{-18pt}
    \caption{Attending to units instead of words. Values correspond to covariate balancing weights.} \label{fig:attn_dataset}
\end{figure}

Consider input sequence as $\mX=[\vX_1,...,\vX_N]^\top\in\sR^{N\times D_X}$. We use a self-attention layer to attend to units in a dataset instead of words in a sentence \cite{vaswani2017attention}, as illustrated in \Cref{fig:attn_dataset}. This can be expressed as
\[
\softmax\big({\mQ\mK^\top/\sqrt{D}}\big)\mV,
\]
where $\mQ=[\vq_1,...,\vq_N]^\top\in\sR^{N\times D}$, $\mK=[\vk_1,...,\vk_N]^\top\in\sR^{N\times D}$, and $\mV=[v_1,...,v_N]^\top\in\sR^{N\times 1}$. Here we consider output as a sequence of scalars; in general, $\mV$ can be a sequence of vectors. The query and key matrices $\mQ,\mK$ can be $\mX$ itself or outputs of several neural network layers on $\mX$.\\
Note that the softmax operation is with respect to per column of $\nicefrac{\mQ\mK^\top}{\sqrt{D}}$, i.e., the $i$-th output is
\begin{equation}\label{eq:ith-attention}
\sum_{j=1}^N \frac{\exp\big(\vq_i^\top\vk_j/\sqrt{D}\big)}{\sum_{j'=1}^{N}\exp\big(\vq_i^\top\vk_{j'}/\sqrt{D}\big)}v_j.
\end{equation}
Following \cite{nguyen2022primal}, if we set $\mQ=\mK$, then there exists a feature map (exact form given in Appendix~\ref{app:omit-proof}) such that for any $i,j\in [N]$, there is $\langle \phi(\vX_j), \phi(\vX_i)\rangle = {\exp\big(\vk_i^\top\vk_j/\sqrt{D}\big)}$. Let $h(\vX_i)=\sum_{j'=1}^N \exp(\vk_i^\top\vk_{j'}/\sqrt{D})$. We can rewrite the $i$-th output of attention layer in Eq.~(\ref{eq:ith-attention}) as
\begin{equation}\label{eq:attn-svm}
    \sum_{j=1}^N \frac{v_j}{h(\vX_j)}\langle\phi(\vX_j), \phi(\vX_i)\rangle.
\end{equation}
This recovers the support vector expansion in Eq.~(\ref{eq:sv-expansion}) by setting $\lambda {v_j}/{h(\vX_j)}=\alpha_j^*W_j$. This shows that at optimum, the SVM classifier takes the form of self-attention.

\textbf{Self-attention to SVM.} Conversely, under mild regularities, we can also read off the optimal balancing weight $\alpha_j^*$ from $\lambda v_j/h(\vX_j)W_j$ if the attention layer is globally optimized with an appropriate loss function. 
In particular, with a penalized hinge loss, the learned optimal self-attention will solve the primal SVM problem in Eq.~(\ref{eq:primal-svm}). Then by the primal-dual relationship, we can equate Eq.~(\ref{eq:attn-svm}) with Eq.~(\ref{eq:sv-expansion}). This establishes the duality between self-attention and the optimal balancing weights $\valpha^*$, which is summarized in Theorem~\ref{thm:1}.
The details of Algorithm~\ref{alg:single-dataset} can be found in \Cref{sec:4-1}.
%The intuition is that with such loss function, an optimal self-attention model will recover dual coefficients that solve the primal SVM problem. Thus at optimum it recovers the support vector expansion, which then connects the attention weight to the optimal balancing weight. This finally gave the duality between causal balancing and self-attention, which is summarized the following theorem.  
\begin{theorem}[Duality between covariate balancing and self-attention]\label{thm:1}
Under mild regularities on $\vX$, learning a self-attention via gradient-based Algorithm~\ref{alg:single-dataset} recovers the optimal covariate balancing weight at the global minimum of the penalized hinge loss in Eq.~(\ref{eq:loss-func}).
\end{theorem}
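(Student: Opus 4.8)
The plan is to identify the global minimizer of the penalized hinge loss in Eq.~(\ref{eq:loss-func}) with the primal SVM optimum of Eq.~(\ref{eq:primal-svm}), and then transport it back to the balancing weights via strong duality and the KKT conditions. First I would fix $\mQ=\mK$ and invoke the explicit feature map $\phi$ of \cite{nguyen2022primal} satisfying $\langle\phi(\vX_i),\phi(\vX_j)\rangle=\exp(\vk_i^\top\vk_j/\sqrt{D})$. The ``mild regularities on $\vX$'' will amount to requiring the keys $\vk_1,\dots,\vk_N$ to be pairwise distinct, which makes the Gram matrix $[\exp(\vk_i^\top\vk_j/\sqrt{D})]_{ij}$ positive definite — equivalently, $\{\phi(\vX_j)\}_{j\in[N]}$ linearly independent — and guarantees $h(\vX_j)>0$. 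Rewriting the $i$-th attention output as in Eq.~(\ref{eq:attn-svm}) shows it equals $\langle\vbeta,\phi(\vX_i)\rangle$ for $\vbeta=\sum_{j=1}^{N}(v_j/h(\vX_j))\,\phi(\vX_j)$, so that as $\mV$ ranges over $\sR^{N}$ the vector $\vbeta$ ranges over $\mathrm{span}\{\phi(\vX_j)\}_{j\in[N]}$ and nothing more.

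Next I would substitute this parameterization into Eq.~(\ref{eq:loss-func}): after eliminating the slacks $\xi_i$, the penalized hinge loss becomes $\tfrac{\lambda}{2}\|\vbeta\|^2+\sum_{i}\max\{0,\,1-W_i(\langle\vbeta,\phi(\vX_i)\rangle+\beta_0)\}$ restricted to $\vbeta\in\mathrm{span}\{\phi(\vX_j)\}$. By the representer theorem the unconstrained infimum over $\gH$ is already attained in that span, so minimizing the attention loss over $(\mV,\beta_0)$ and solving the primal SVM Eq.~(\ref{eq:primal-svm}) have the same optimal value and the same optimal $\vbeta$. Since the loss is jointly convex in $(\mV,\beta_0)$ — a positive-semidefinite quadratic plus a hinge composed with an affine map — the gradient-based iterates of Algorithm~\ref{alg:single-dataset} converge to the global minimum, i.e., to the primal SVM optimum $\vbeta^{*}$.

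Then I would close the loop on the dual side. The convex program Eq.~(\ref{eq:primal-svm}) satisfies Slater's condition, so strong duality and the KKT stationarity condition $\lambda\vbeta^{*}=\sum_{j}\alpha_j^{*}W_j\phi(\vX_j)$ hold \cite{boyd2004convex}, where $\valpha^{*}$ is the optimal dual vector; by Theorem~1 of \cite{tarr2021estimating}, $\valpha^{*}$ solves the dual SVM Eq.~(\ref{eq:dual-svm}) and hence the balancing program Eq.~(\ref{eq:kom}). Equating $\sum_j(v_j^{*}/h(\vX_j))\phi(\vX_j)=\vbeta^{*}=\tfrac1\lambda\sum_j\alpha_j^{*}W_j\phi(\vX_j)$ and matching coefficients — legitimate by the linear independence of $\{\phi(\vX_j)\}$ — yields $\alpha_j^{*}=\lambda v_j^{*}W_j/h(\vX_j)$, using $W_j\in\{\pm1\}$. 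That is, the optimal covariate balancing weights are read off the key-value tensors of the trained attention layer, which is exactly the claim.

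The main obstacle is the clean identification of ``global minimum of the attention loss'' with ``primal SVM optimum'': one must check that the attention parameterization is exactly as expressive as $\mathrm{span}\{\phi(\vX_j)\}$ (so the representer theorem applies with equality), and that the objective is genuinely convex in the parameters Algorithm~\ref{alg:single-dataset} actually updates — which is why the keys should enter only through the fixed feature map $\phi$ while $\mV$ and $\beta_0$ carry the balancing information. The remaining work is pinning down the regularity conditions: distinct keys $\Rightarrow$ positive-definite kernel $\Rightarrow$ unique $\vbeta^{*}$ and an invertible coefficient comparison, together with $\lambda>0$ so that the support vector expansion Eq.~(\ref{eq:sv-expansion}) is valid. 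A minor but necessary bookkeeping point is the intercept $\beta_0$ dropped ``for simplicity'': its KKT stationarity is precisely what produces the constraint $\vW^\top\valpha^{*}=0$ in Eq.~(\ref{eq:dual-svm}), and the box constraint $\valpha\preceq\vone$ corresponds to the hinge having unit slope.
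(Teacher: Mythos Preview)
Your proposal is correct and follows essentially the same route as the paper: parameterize $\vbeta=\sum_j (v_j/h(\vX_j))\phi(\vX_j)$, identify Eq.~(\ref{eq:loss-func}) with the unconstrained penalized hinge form of the primal SVM Eq.~(\ref{eq:primal-svm}), invoke strong duality and KKT stationarity to get $\lambda\vbeta^{*}=\sum_j\alpha_j^{*}W_j\phi(\vX_j)$, and read off $\alpha_j^{*}=\lambda v_j^{*}/(h(\vX_j)W_j)$ by matching coefficients under linear independence of the $\phi(\vX_j)$. Your explicit appeal to the representer theorem nicely fills a step the paper leaves implicit (why the span-restricted minimum coincides with the unrestricted primal optimum), and your ``distinct keys $\Rightarrow$ positive-definite exponential kernel'' is exactly the regularity the paper formalizes as linear independence of $\phi(\vX_1),\dots,\phi(\vX_N)$.

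One small over-reach: the convexity/convergence claim is both unnecessary and not quite right as stated. The theorem only asserts recovery \emph{at} the global minimum of Eq.~(\ref{eq:loss-func}); it makes no claim that Algorithm~\ref{alg:single-dataset} reaches it. Moreover, in the paper $\vtheta$ subsumes the parameters of the layers producing $\mK$, so the feature map $\phi$ itself moves during training and the loss is not convex in $\vtheta$. Your instinct to freeze the keys is reasonable, but the paper's statement and proof work for learnable keys too: at any global minimizer $\vtheta^{*}$, the induced $\phi$ is fixed, and the $\mV$-coordinate of $\vtheta^{*}$ must minimize the (now convex) hinge loss for that $\phi$, which by the representer theorem is the primal SVM optimum for that $\phi$. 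So drop the convergence sentence and the argument goes through unchanged.
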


\section{Practical Algorithms Towards Causal Foundation Models}\label{sec:4}
In this section, we show how our theoretical results can lead to a gradient-based, transformer-type algorithm for zero-shot optimal covariate balancing. Specifically, in \Cref{sec:4-1}, we introduce a gradient-based solution for the traditional single-dataset setting. We then show how it can be extended to enable zero-shot inference on unseen datasets through amortization in Section~\ref{sec:4-2}.  Details of the model architecture and preprocessing steps are provided in Appendix~\ref{app:implementation}.

% In Section~\ref{sec:4-3}, we propose a new type of data augmentation for the amortized model. Finally in Section~\ref{sec:4-4}, we show how the trained model can be used to estimate counterfactuals, thereby addressing various causal inference tasks.

\subsection{Gradient-based Optimal Balancing via Self-Attention}\label{sec:4-1}

Comparing Eq.~(\ref{eq:attn-svm}) and Eq.~(\ref{eq:sv-expansion}), we seek a training procedure such that $\sum_{j=1}^N \frac{v_j}{h(\vX_j)}\phi(\vX_j)$ recovers the optimal $\vbeta^*$ that solves primal SVM in Eq.~(\ref{eq:primal-svm}). Note that Eq.~(\ref{eq:primal-svm}) corresponds to a constrained optimization problem that is unsuitable for gradient descent methods. However, it is equivalent to an unconstrained optimization problem by minimizing the penalized hinge loss \cite{hastie2009elements} $\frac{\lambda}{2}\|\vbeta\|^2 + \sum_{i=1}^N \big[1-W_i\big(\langle \vbeta,\phi(\vX_i)\rangle+\beta_0\big)\big]_+$. This motivates the use of the following loss function:
\begin{equation}\label{eq:loss-func}
\begin{aligned}
    \Ls_\vtheta (\sD) &= \frac{\lambda}{2} \left\|\sum_{j=1}^N \frac{v_j}{h(\vX_j)}\phi(\vX_j)\right\|^2 \\
   +& \left[\vone -\vW\big(\softmax(\mK\mK^\top/\sqrt{D})\mV+\beta_0\big)\right]_{+}.
\end{aligned}
\end{equation}
{In other words, Eq.~(\ref{eq:loss-func}) follows from plugging $\vbeta= \sum_{j=1}^N \frac{v_j}{h(\vX_j)}\phi(\vX_j)$  into the penalized hinge loss $\frac{\lambda}{2}\|\vbeta\|^2+\sum_{i=1}^N \big[1-W_i\big(\langle \vbeta,\phi(\vX_i)\rangle+\beta_0\big)\big]_+$.}

Here we use $\vtheta$ to subsume all the learned parameters, including $\mV$ and parameters of the layers (if any) to obtain $\mK$. We learn $\vtheta$ via gradient descent on Eq.~(\ref{eq:loss-func}). Note that the penalization can be computed exactly by using the formula for inner products between features, i.e., \[\left\|\sum_{j=1}^N \frac{v_j}{h(\vX_j)}\phi(\vX_j)\right\|^2=\sum_{i,j=1}^N\frac{v_iv_{j}\exp\big(\vk_i\vk_j^\top/\sqrt{D}\big)}{h(\vX_i)h(\vX_j)}.\]

% \begin{wrapfigure}[9]{r}{0.53\textwidth}
% \vspace*{-20pt}
% \begin{minipage}[t]{0.53\textwidth}
\begin{algorithm}[ht]
\begin{algorithmic}[1]
\caption{Causal Inference with Attention (CInA)}\label{alg:single-dataset}
    \State \textbf{Input}: Covariates $\vX$ and treatments $\mW$.
    \State \textbf{Output}: Optimal balancing weight $\valpha^*$.
    \State Hyper-parameter: penalty weight $\lambda>0$.
    \State Parameters: $\vtheta$ (including $\mV$), step size $\eta$.
    % \WHILE{not converged}
    \State \textbf{while}  \textbf{do} 
        \State\hspace{10pt} Compute $\mK$ using forward pass.
        \State\hspace{10pt} Update $\vtheta\leftarrow \vtheta - \eta\nabla \Ls_\vtheta$.
    % \ENDWHILE
    \State \textbf{end while}
    \State \textbf{return} $\lambda \cdot \mV/h(\vX)\mW$.
\end{algorithmic}
\end{algorithm}
% \end{minipage}
% \end{wrapfigure}

Theorem~\ref{thm:1} guarantees that under mild regularities, the optimal parameters lead to the optimal balancing weights in terms of the adversarial squared error. This adversarial squared error is computed using an unit-ball RKHS defined by $\phi$. The optimal balancing weights and ATEs can be obtained via
\begin{align*}
    \alpha_j^*&=\frac{\lambda v_j}{h(\vX_j)W_j},\\
    \hat{\tau}&= (\valpha^*\mW)^\top \mY.
\end{align*}

For this to hold, arbitrary mappings can be used to obtain $\vk_i$ from $\vX_i$, which allows for the incorporation of flexible neural network architectures. We summarize our method in Algorithm~\ref{alg:single-dataset}, which is later referred to as \emph{CInA} (or \emph{Ours}).

\textbf{Intuition of Why \emph{CInA} Works.}
CInA works by extracting the causal information of how to infer optimal balancing weights from covariates and treatments. As these weights can balance the treated and control groups with respect to the covariates, they isolate the causal effect of the treatment on the outcome from other spurious factors which allows for reliable treatment effect estimation. The self-attention in this case attend to different units in a dataset by looking at their covariates and treatments to produce the weights that can balance the treated and control groups with respect to the covariates.

\subsection{Zero-shot Causal Inference under Multi-dataset Setting}\label{sec:4-2}
To enable zero-shot estimation of treatment effects, we consider multiple datasets denoted as $\sD^{(m)} =\{(\vX_i,T_i,Y_i)\}_{i\in [N_m]} = (\vX^{(m)}, \mT^{(m)}, \mY^{(m)})$ for $m\in [M]$. Each dataset $\sD^{(m)}$ contains $N_m$ units following the description in Section~\ref{sec:3-1}. We allow for datasets of different sizes, mimicking real-world data gathering practices, where a large consortium of datasets may exist. The setting encapsulates cases where individual datasets are created by distinct causal mechanisms; however, different units within a single dataset should be generated via the same causal model. This presents a new challenge, which requires the model to generalize to new datasets without supervision.

% \textcolor{blue}{how to pad, and resolve different dim, and refer details (e.g., augmentation) to appendix}

Algorithm~\ref{alg:single-dataset} shows how one can read off the optimal weights $\valpha^*$ from a trained model with attention as its last layer in a single dataset. Note that the value vector $\mV$ is encoded as a set of parameters in this setting. On a new dataset $\sD^{(*)} = (\vX^{(*)}, \mT^{(*)}, \mY^{(*)})$, the values of $\vX^{(*)}$ and $\mW^{(*)}$ are changed, and thus the optimal $\mV^{(*)}$ that minimizes $\Ls_\vtheta(\sD^{(*)})$ should also differ from the encoded parameters. As indicated by the form of $\Ls_\vtheta(\sD^{(*)})$, the optimal $\mV^{(*)}$ only depends on $\vX^{(*)}$ through $\mK^{(*)}$. 
{To see this, note that the first term of $\Ls_{\vtheta}(\sD^*)$ can be equivalently written according to Eq.~(\ref{eq:loss-func}), where the numerator only depends on $\mX$ through $\mK$. The denominator also only depends on $\mK$ since by definition $h(\mX_i)=\sum_{j'=1}^N \exp(\vk_i^\top \vk_{j'}/\sqrt{D})$. The second term also only depends on $\mX$ through $\mK$, which can be seen by its form.}
Therefore we encode the value vector $\mV$ as a {neural networ} transformation of $\mK$ and $\mW$. {Details can be found in Appendix \ref{app:h1}.} Denote the parameters of this transformation as $\vphi$ and let $\vtheta$ subsumes $\vphi$.
We learn $\vphi$ by minimizing  
\[
\sum_{m \in [M]} \Ls_\vtheta(\sD^{(m)})
\]
 on the training datasets in an end-to-end fashion. On a new dataset not seen during training, we can directly infer its optimal balancing weight $\valpha^*$ via $\lambda\cdot\mV^{(*)}/h(\vX^{(*)})\mW^{(*)}$, where $\mV^{(*)}$ and $h(\vX^{(*)})$ are direct outputs using the forward pass of the trained model.
 %on $(\vX^{(*)}, \mT^{(*)}, \mY^{(*)})$
 This procedure is summarized in Algorithm~\ref{alg:multi-dataset} and Algorithm~\ref{alg:multi-dataset-inference}. We illustrate the forward pass on the right. This multi-dataset version of our method is later referred to as \emph{CInA (ZS)} (or \emph{Ours (ZS)}).

\begin{figure}[t]
% \vspace*{-6pt}
    \centering
    \includegraphics[width=0.35\textwidth]{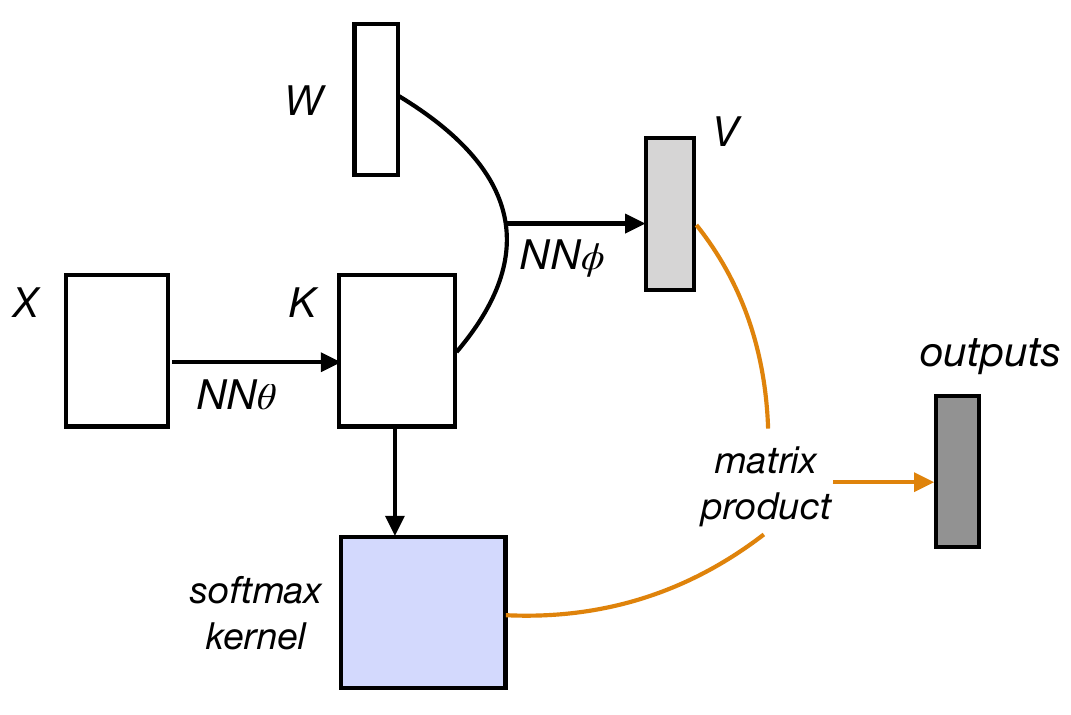}
    % \vspace*{-6pt}
    \caption{CInA (multi-dataset) forward pass.}\label{fig:cina-multi-illustration}
\end{figure}

{\textbf{Intuition of What \emph{CInA (ZS)} Learns.}} 
\emph{CInA (ZS)} is trained on multiple datasets and learns how to balance in an amortized fashion via the SVM loss. During testing, it can infer causal effects in a zero-shot manner, as it acquired the ability to directly infer the optimal balancing weights on a new dataset.
The transformation that encodes for $\mV$ approximates the solution to the optimization problem in Eq.~(\ref{eq:primal-svm}). Thus Algorithm~\ref{alg:multi-dataset} can be seen as \emph{learning to debias an observational dataset} by learning to how optimize \cite{bengio2021machine}, which enjoys fast inference on a new dataset. It is worth noting that as our optimization problem is continuous and easier to solve than combinatorial optimization, we do not need to employ techniques such as reinforcement learning. We also do not require ground-truth labels to any individual optimization problems as the parameters are learned fully end-to-end. 

% \begin{minipage}{0.46\textwidth}
% \vspace*{-6pt}
\begin{algorithm}[ht]
\begin{algorithmic}[1]
\caption{CInA (multi-dataset version).}\label{alg:multi-dataset}
    \State \textbf{Input}: Training datasets $\sD^{(1)},...,\sD^{(M)}$.
    % \State \textbf{Output}: Learned parameters
    \State Hyper-parameter: penalty weight $\lambda>0$.
    \State Parameters: $\vtheta$ (including $\vphi$), step size $\eta$.
    % \WHILE{not converged} 
    \State \textbf{while} not converged \textbf{do}
    \State\hspace{10pt}    \textbf{for} {$m\in[M]$} \textbf{do}
            \State\hspace{20pt} Compute $\mK,\mV$ using forward pass.
            \State\hspace{20pt} Update $\vtheta\leftarrow \vtheta - \eta\nabla \Ls_\vtheta(\sD^{(m)})$.
    \State\hspace{10pt}    \textbf{end for}
    \State \textbf{end while}
    % \State \textbf{return} $\mV/h(\vX)\mW$.
\end{algorithmic}
\end{algorithm}
% \end{minipage}
% \hfill
% \begin{minipage}{0.5\textwidth}
\vspace*{-6pt}
\begin{algorithm}[ht]
\begin{algorithmic}[1]
\caption{Direct Inference with CInA.}\label{alg:multi-dataset-inference}
    \State \textbf{Input}: Test dataset $\sD^{(*)}$, trained model, used penalty weight $\lambda$.
    \State \textbf{Output}: Estimated sample average treatment effect $\hat{\tau}$.
    \State Compute $h(\vX^{(*)}),\mV^{(*)}$ using forward pass.
    \State Compute $\valpha^*=\lambda\cdot\mV^{(*)}/h(\vX^{(*)})\mW^{(*)}$.
    \State \textbf{return} $\hat{\tau}=(\valpha^*\mW^{(*)})^\top \mY^{(*)}$.
\end{algorithmic}
\end{algorithm}
% \end{minipage}

% This reduces the computational burden of learning in multiple steps, albeit unavoidable trade-off in terms of accuracy. 

\subsection{Computational Complexity}\label{sec:4-3}

We now discuss the computational complexity of our proposed method with respect to the number of units $N$ in each dataset. Suppose the last attention layer uses keys and queries of dimension $D$. Inside each iteration of every epoch, since it needs to compute $\exp(\vk_i\vk_j/\sqrt{D})$ for each pair of units $i,j$ and $h(\vX_i)$ for each $i$, the total complexity of this layer is $\gO(N^2D)$. Based on the outputs of the forward pass, the complexity to evaluate the loss function is $\gO(N^2)$, as it evolves computing the penalty term. During inference, the complexity relies on the complexity of the forward pass, as computing $\valpha^*$ and $\hat{\tau}$ are $\gO(N)$.

\section{Experiments}\label{sec:5} 

We study the performance of CInA on causal inference tasks using both synthetic and real-world datasets \footnote{Code can be found at \url{https://github.com/microsoft/causica/tree/main/research_experiments/cina}.}. Our objectives are twofold: to validate our theoretical findings in a traditional single-dataset setting, and to evaluate the feasibility of CInA in a causal foundation modeling context, where the multi-dataset version of CInA will be used for zero-shot causal inference across settings with different levels of difficulty. The detailed implementations of this section can be found in Appendix~\ref{app:implementation}. {\emph{In Appendix~\ref{app:add-emp-result}, we provide larger-scale, cross-domain generalization experiments, as well as comparisons to two neural baselines} \cite{shi2019adapting,chernozhukov2022riesznet}.}

\subsection{Simulation Study A: fixed causal graph} \label{sec:sim_A}

\textbf{Base Setting.} We follow the simulation study setting in \cite{tarr2021estimating}, \cite{lee2010improving}, and \cite{setoguchi2008evaluating} with some modifications. The main purpose of this experiment is to validate our theoretical findings by showing that CInA can perform competitively compared to baselines in the traditional single-dataset setting.  We consider a synthetic dataset generated using a fixed causal graph. The covariates of each unit, $\vX_i$, are drawn from a $10$-dimensional multivariate Gaussian distribution with 4 pairs of correlations introduced. Then the treatment is modeled as a single binary variable generated via a logistic model $P(T_i=1|\vX_i) = \text{sigmoid}(\veta^\top h(\vX_i))$, where $\veta$ is a randomly sampled coefficient parameter, and $h$ is a moderately non-linear and non-additive function detailed in \cite{setoguchi2008evaluating}. Finally, the outcome variable is modeled as $Y(T) = \gamma_0 + \vgamma^\top \vx + \tau T + \epsilon $ with $\epsilon \sim \mathcal{N}(0, 0.1)$ and $\tau = -0.4$ (which defines the ATE). For this setting, we generate 100 different datasets sharing the same parameters, each containing 1024 units. We train all baselines, and the \emph{single-dataset version of CInA} in Section~\ref{sec:4-1}, on each of these 100 datasets \emph{separately}, and evaluate their overall performance. We refer to this setting as the \textbf{single-mechanism} setting. We also consider three harder variations to this base setting, detailed below.

\textbf{Variation 1.} In this variation, we aim to evaluate how the multi-dataset version of CInA performs in a zero-shot inference setting with moderate difficulty. We generate 100 different datasets (split into 60/20/20 for training/validation/testing). For each dataset, we first sample a new coefficient parameter $\veta$ from a fixed random distribution $p(\veta)$. We then generate 1024 units using the same form of outcome model specified in the base setting but with a different $\veta$ for each dataset. Our multi-dataset model, \emph{CInA (ZS)}, is trained on 60 training datasets, with hyperparameters selected using 20 validation sets. The evaluation of its zero-shot performance is based on 20 testing datasets. All other baselines are still trained on a dataset-specific manner, i.e., they will be fit to the 20 testing sets separately. We refer to this setting as the \textbf{multi-mechanism} setting. 

\textbf{Variation 2.} In the second variation, similar to variation~1, We generate 100 different datasets, each using a different coefficient parameter $\veta$ from some prior distribution $p(\veta)$. However, instead of sharing the same prior distribution for $\veta$, we force the training/validation datasets and testing datasets to have different supports for $\veta$, i.e., $\text{supp}(p_{\text{training}}(\veta))= \text{supp}(p_{\text{validation}}(\veta))\neq \text{supp}(p_{\text{testing}}(\veta))$.  We refer to this setting as \textbf{multi+OOD}.

\textbf{Variation 3.} The third variation is the same as variation 2, except that the 100 datasets have different numbers of units, ranging from $(512, 1024)$. This setting is referred to as \textbf{Multi+OOD+diff\textunderscore size}. 

\begin{figure}[t]
% \vspace*{-6pt}
    \centering
    \includegraphics[width=0.45\textwidth]{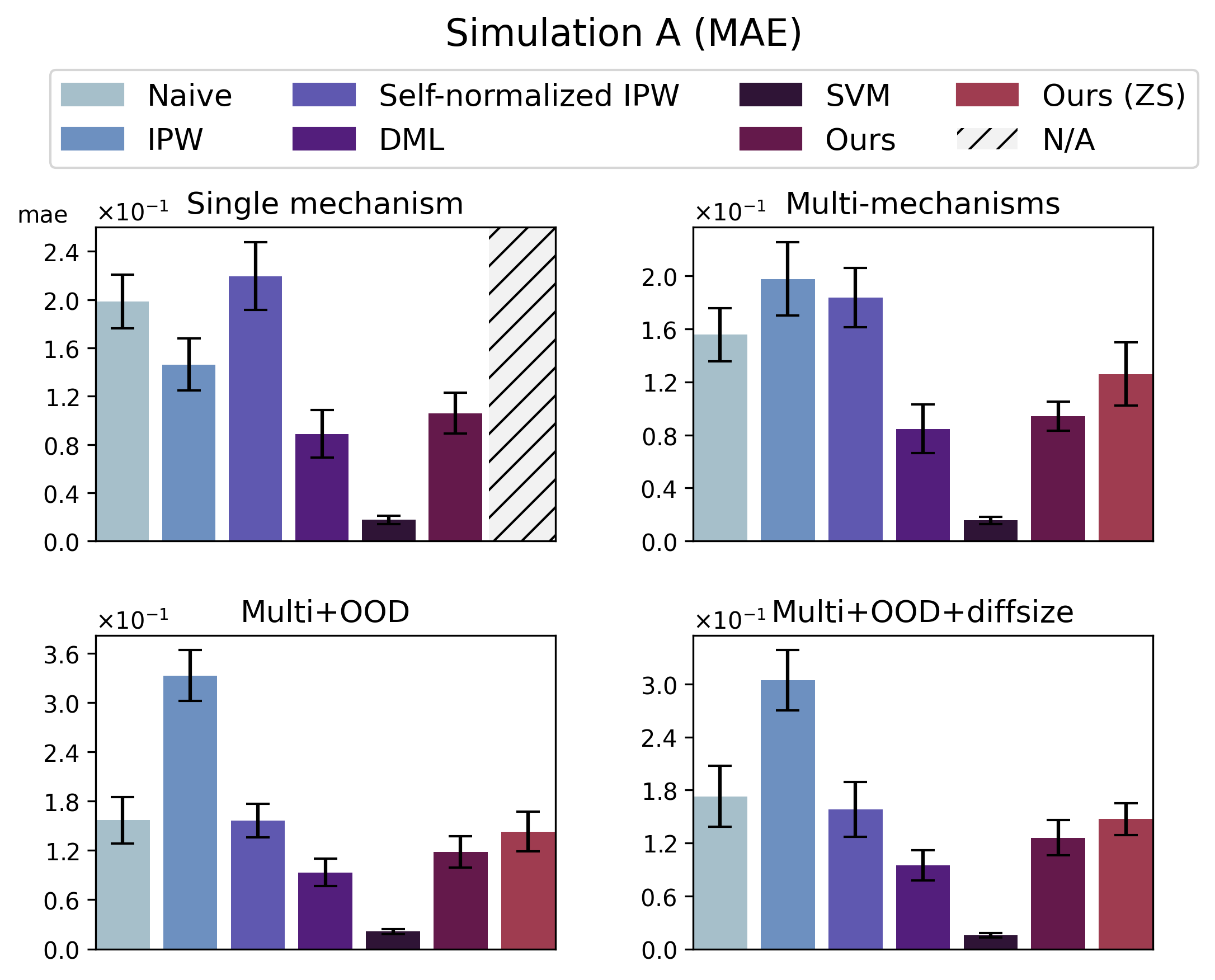}
    % \vspace*{-2pt}
    \caption{MAE for Simulation A. \textit{CINA} matches the best learning-based method \textit{DML}; \textit{CINA (ZS)} generalizes well in moderate settings.}
    \label{fig:sim_A}
    \vspace*{-6pt}
\end{figure}

\textbf{Baselines (references) and Metrics.} As previous methods are designed for a single dataset, we used them as reference for evaluating our zero-shot method. We consider the following baselines: the \emph{naive} estimator, that performs covariate balancing with uniform weights in $\sA$; the \emph{IPW} estimator \cite{rosenbaum1983central, rosenbaum1987model}, which performs classical inverse probability weighting with logistic models; the \emph{self-normalized IPW} estimator \cite{busso2014new, robins2007comment, imbens2004nonparametric} that normalizes the IPW weights to be in $\sA$; the \emph{double machine learning (DML)} estimator \cite{chernozhukov2018double} with a linear final stage model; and finally, the \emph{SVM} approach which directly solves Eq.~(\ref{eq:dual-svm}) as quadratic programming on a per-dataset basis.  Among those baselines, the parameter $\lambda$ for SVM was selected using validation datasets, whenever available. When $\lambda$ is selected properly, the SVM solution should give the exact solution and serve as the ground truth reference for the gradient-based methods, \emph{CInA} and \emph{CInA-(ZS)}. To quantify the accuracy of causal inference, we use mean absolute error (MAE) between true ATE and predicted ATE as the main evaluation metric.

\textbf{Results.} Figure~\ref{fig:sim_A} shows the results for 4 different settings of simulation A. We observed that across all settings, the single dataset version of \emph{CInA} consistently give on-par performance with \emph{DML}, despite the unfair advantage of \emph{DML} since it utilizes the outcome variables during training. CInA outperforms all other re-weighting based methods except for the ground truth reference, \emph{SVM}. This further confirms the validity of our theoretical findings. Furthermore, in the multi-dataset settings (\textbf{Multi-mechanism}, \textbf{Multi+OOD} and \textbf{Multi+OOD+diff\textunderscore size}), \emph{CInA (ZS)} shows good zero-shot generalization capabilities under moderate causal mechanism shifts, and performs competitively against other baselines that are trained on the testing datasets themselves on a per-dataset basis.

\subsection{Simulation Study B: Multiple Causal Graphs} \label{sec:sim_ER}

In \Cref{sec:sim_A}, we validated our methods in both traditional single-dataset setting and moderate zero-shot settings under the assumption that all tasks/datasets share the same causal graph. Nevetheless, in an ideal context of causal foundational modeling, a good model should be able to perform zero-shot causal inference on datasets coming from both different graphs and different functional relationships. Therefore, in this section, we generate a large number of random synthetic datasets with randomly sampled causal graphs to further evaluate the capability of CInA.

\begin{figure}
    % \vspace*{-8pt}
    \centering
    \includegraphics[width=0.43\textwidth]{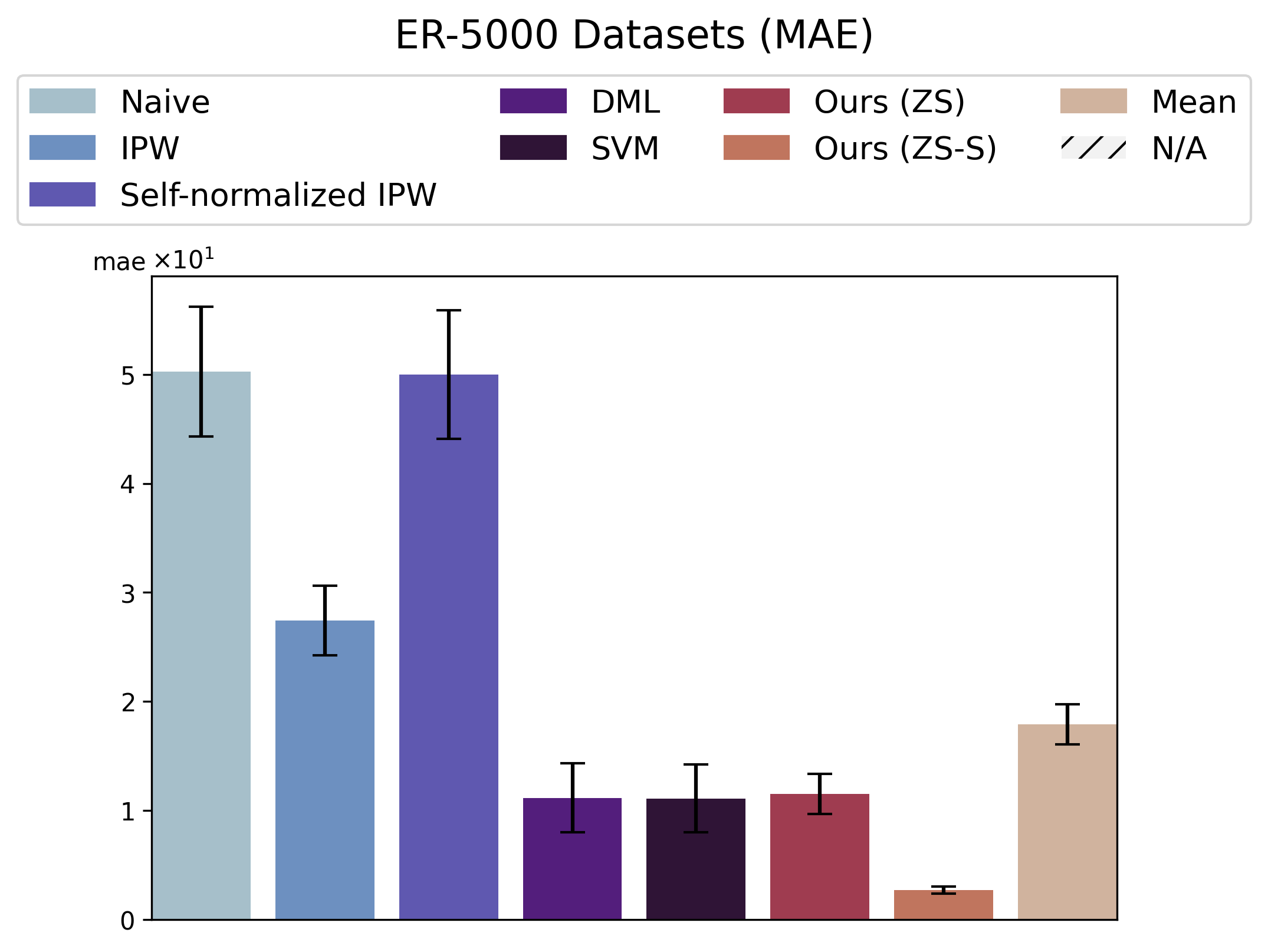}
    % \vspace*{-10pt}
    \caption{MAEs for ER-5000. \textit{CINA} and \textit{CINA (ZS)} match the best reference method, where \textit{CINA (ZS-S)} improves upon \textit{CINA (ZS)} with additional supervised signals.}
    \label{fig:sim_B}
    % \vspace*{-10pt}
\end{figure}

\textbf{Datasets.} 
% We follow \cite{lachapelle2019gradient} by sampling a random DAG from the \textbf{Erd\H{o}s-R\'{e}nyi (ER)} model \cite{erdHos1959renyi}. Based on the sampled DAG, we simulate each variable (in $\vX$, $T$ and $Y$) using a linear structural causal model with randomly sampled linear coefficients and mixed-type variables (containing both binary and continuous observations). Then we randomly select a binary variable to be the treatment variable and a continuous variable is to be the outcome variable. 
Following \cite{lachapelle2019gradient}, we generate 5000 datasets (referred to as the \textbf{ER-5000} dataset) each using a different random {Erd\H{o}s-R\'{e}nyi} DAG \cite{erdHos1959renyi}. {A detailed description is given in Appendix~\ref{app:data}.} All datasets are pre-standardized and split into a 60/20/20 ratio for training/validation/testing. Similar to above, \emph{CInA (ZS)} and \emph{CInA (ZS-S)} (described below) are trained on training datasets, with hyperparameters selected based on validation sets. Reported statistics are based on testing datasets. All baselines are trained on each testing dataset individually.  

\textbf{Baselines (references) and Metrics.} The baselines considered in this experiment are the same as \Cref{sec:sim_A}, with the exception that the DML baseline performs additional model selection from \emph{linear DML}, \emph{kernel DML}\cite{nie2021quasi}, and \emph{causal forest DML} \cite{wager2018estimation, athey2019generalized}. We add another baseline designed for \textbf{ER-5000}, dubbed as \emph{mean prediction}, which uses the mean ATE across all training datasets as the prediction for testing datasets. This helps us examine whether CInA is simply memorizing the ATEs from the training set. In addition to the evaluation metric used \Cref{sec:sim_A}, we evaluate the computational run-time of all methods on testing datasets.

\textbf{Supervised Training of CInA.} Unlike \Cref{sec:sim_A}, all datasets in \textbf{ER-5000} have different average treatment effects. This allows us to utilize the ground truth ATEs of training datasets as additional supervised signals. We incorporate this via simultaneously minimizing $\sum_{m \in [M]} \| (\mV^{(m)}/h(\vX^{(m)}))^\top \mY^{(m)} - \tau^{(m)} \|^2$. The new loss function hence becomes
\begin{equation}\label{eq:loss-func-supervised}
\begin{aligned}
    &\sum_{m \in [M]} \Ls_{\vtheta} (\sD^{(m)}) \\+ &\mu \sum_{m \in [M]} \big\| (\mV^{(m)}/h(\vX^{(m)}))^\top \mY^{(m)} - \tau^{(m)} \big\|^2,
\end{aligned}
\end{equation}
where $\mu$ is the adjustable coefficient with default value $1$. We refer to this supervised variation of our method as \emph{CInA (ZS-S)} (or \emph{Ours (ZS-S)}).

\textbf{Results.} \Cref{fig:sim_B} summarizes the results on \textbf{ER-5000} datasets. We observe that the unsupervised version of \emph{CInA (ZS)} already reached the performance of \emph{DML}, while being able to significantly accelerate the inference computational time by a magnitude of $\sim 10^2$ (\Cref{fig:time}). With additional supervised signals, \emph{CInA (ZS-S)} is able to significantly outperforms all per-dataset baselines.

\subsection{Empirical Studies on Real-world Datasets} \label{sec:real_exp}

\begin{figure*}[t]    
    \centering    
    % \vspace{-12pt}
    \begin{adjustbox}{valign=t}  
    \begin{minipage}{0.62\textwidth}    
        \centering    
        \includegraphics[width=1\textwidth]{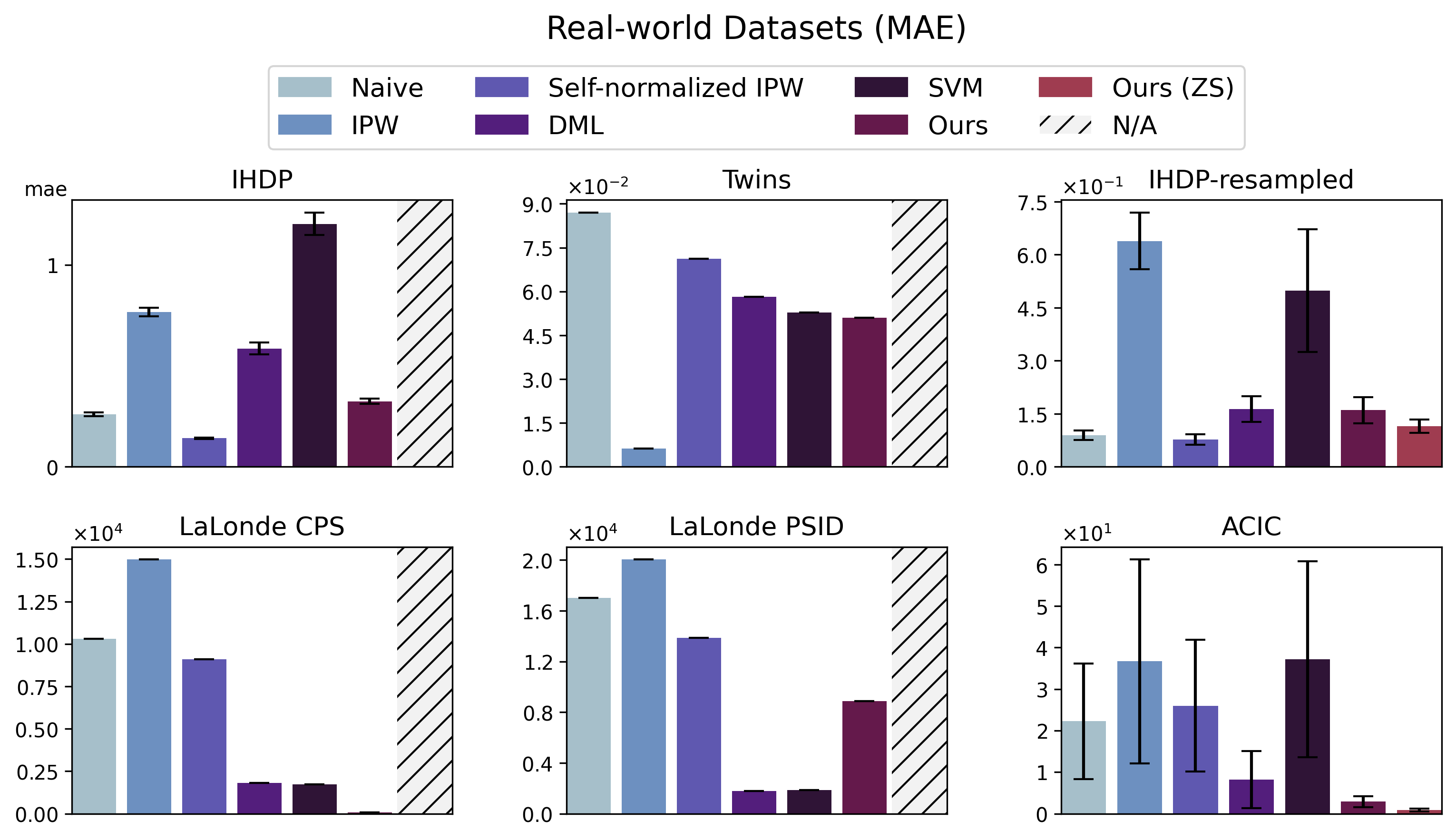}    
        \caption{MAE for real-world datasets. \textit{CInA} outperforms the majority of baselines in most cases: it achieves the best average ranking of \textbf{1.83}, whereas the second-best is \textit{DML} with an average ranking of \textbf{3}.
        \textit{CInA (ZS)} generalizes well and returns the best result for ACIC.}    
        \label{fig:real}    
    \end{minipage}  
    \end{adjustbox}\hfill    
    \begin{adjustbox}{valign=t}  
    \begin{minipage}{0.35\textwidth}    
        \centering    
        \includegraphics[width=1\textwidth]{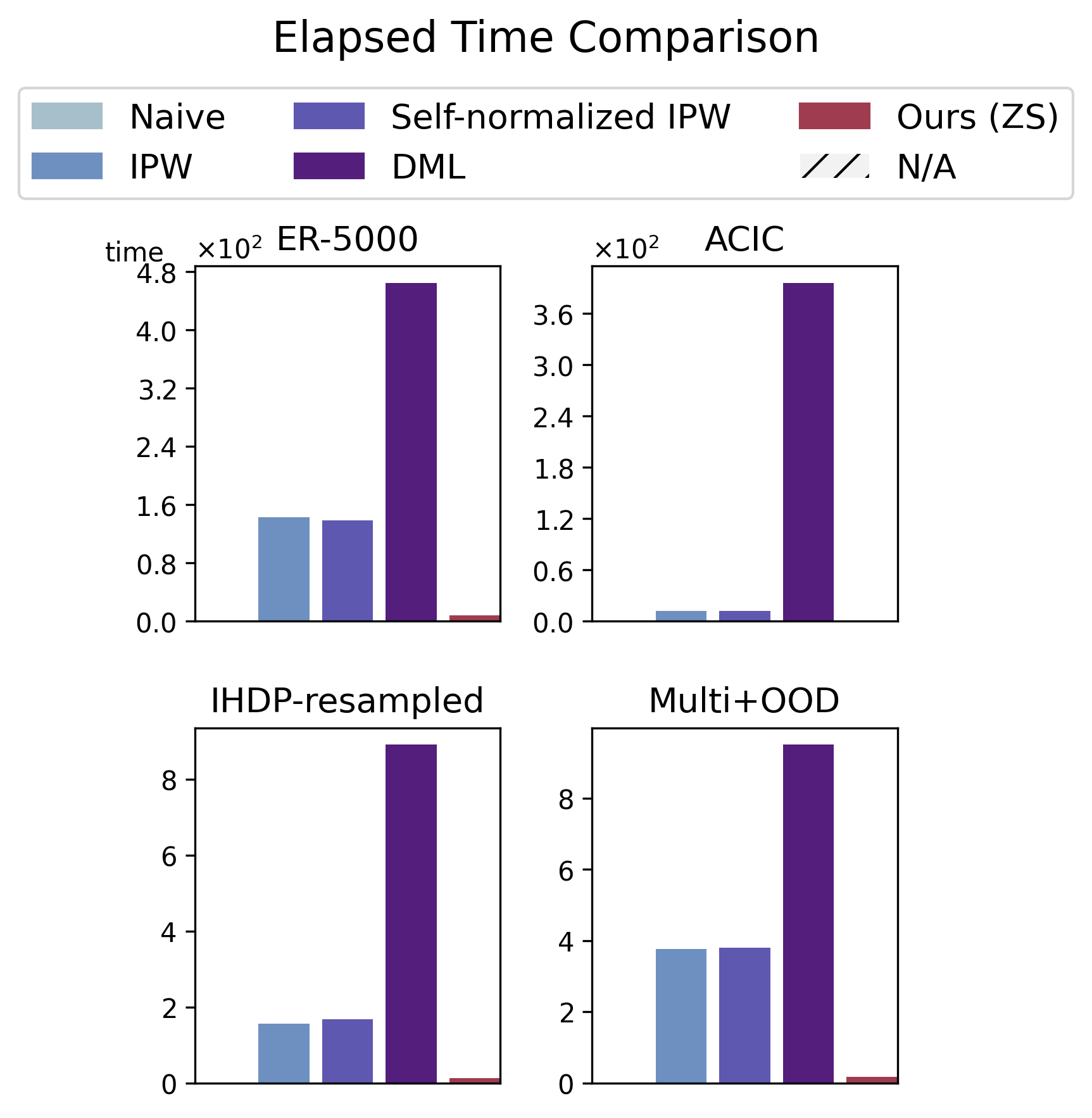}    
        \caption{Elapsed time (seconds). \textit{CInA (ZS)} produces estimands instantaneously.}    
        \label{fig:time}    
    \end{minipage}    
    \end{adjustbox}  
\end{figure*}

\textbf{Datasets and Baselines (references).} We evaluate treatment effect estimation performances on real-world datasets including:  \textbf{Twins} \cite{almond2005costs}, \textbf{IHDP} \cite{hill2011bayesian}, \textbf{IHDP-resampled} \cite{chernozhukov2022riesznet}, \textbf{ACIC} \cite{shimoni2018benchmarking,macdorman1998infant}, \textbf{LaLonde CPS} and \textbf{LaLonde PSID}  \cite{lalonde1986evaluating}.  Among them, \textbf{IHDP-resampled} and \textbf{ACIC} naturally come with multiple datasets, hence can be used to evaluate the zero-shot causal inference for \emph{CInA (ZS)}. For other datasets, only the \emph{single dataset version of CInA} is evaluated due to their single-causal mechanism nature. A detailed description of these datasets can be found in Appendix~\ref{app:data}. All baselines and cross-validation settings are the same as \Cref{sec:sim_ER}. 

\textbf{Results.} \Cref{fig:real} summarizes our results. We observe that the experimental findings in simulation studies also hold in real-world settings. In single-dataset experiments, \emph{CInA} is able to outperform the majority of per-dataset baselines in most cases (except for \emph{DML}  in \textbf{LaLonde PSID} and \emph{IPW} in \textbf{Twins}, etc). In multi-dataset experiments, namely, \textbf{IHDP-resampled} and \textbf{ACIC}, \emph{CInA (ZS)} outperforms the majority of baselines including \emph{CInA}. Furthermore, we noticed that unlike in simulations, \emph{SVM} is not working well in \textbf{IHDP-resampled} and \textbf{ACIC}. This is potentially because the hyper-parameter selection is performed on validation datasets, which by construction, do not represent the causal graphs/functional relationships of the {IHDP/ACIC} test datasets well ({\Cref{app:data}}). However, our results show that \emph{CInA (ZS)} and \emph{CInA (ZS-S)} are able to robustly perform zero-shot causal inference on unseen datasets in this case. In Appendix~\ref{app:add-emp-result}, we provide additional generalization results, where the model is trained on simulation dataset and generalize to real-world datasets. In summary, \emph{CInA} and its variations generally perform well in real-world settings, however its performance may be limited by the availability of dataset resources.

\section{Discussion}\label{sec:6}

In this work, we take a first step towards building causally-aware foundation models for complex tasks, with a particular focus on the duality between causal inference and attention mechanisms in transformer-based architectures. In theory, we show that covariate balancing can be solved via training any neural network with self-attention as its last layer. Our proposed approach, Causal Inference with Attention (CInA), leverages multiple unlabeled datasets and is capable of performing zero-shot causal inference on unseen data. This stands in contrast to previous approaches, which need to re-optimize on new data. Empirical results show that CInA generalizes well to out-of-distribution datasets and various real-world datasets, reaching and even surpassing the performance of traditional per-dataset causal inference approaches. Therefore, we believe that our methods can serve as a promising stepping stone towards causally-aware foundation models.

{Going forward, we view it as an important future step to extend the scope of empirical efforts for obtaining a fully pretrained causal foundation model. First, much work remains to be done to build large (public) datasets incorporating large-scale real-world/semi-synthetic data. Second, it would be crucial to improve the efficiency of our method, potentially incorporating techniques from efficient transformers \cite{child2019generating,kitaev2020reformer,katharopoulos2020transformers,sun2023retentive}.}

\section*{Acknowledgements}
We thank Meyer Scetbon, Shantanu Gupta, Divyat Mahajan, Tom Minka, and the anonymous reviewers for insightful comments that improved this work. We thank the members of Project Causica at Microsoft Research for helpful discussions. We thank Colleen Tyler, Maria Defante, and Lisa Parks for conversations on real-world use cases that inspired this work.

\bibliography{ref}
\bibliographystyle{apalike}

\appendix

\newpage

\section{Discussion on the Definition of (Causal) Foundation Models}\label{app:definition-foundation}

In this paper, we focus on treatment effect estimation tasks (defined in Section~\ref{sec:3-1}). Our model is then tailored for generalizable zero-shot estimating average treatment effects. That is, given unseen datasets/contexts that contains observational records of covariates, treatments, and effects, we aim to estimate the underlying treatment effects using a forward pass of the underlying model.

This approach is inline with the definition of foundation models discussed in \cite{bommasani2021opportunities}: ``any model that is trained on broad data (generally using self-supervision at scale) that can be adapted (e.g., fine-tuned) to a wide range of downstream tasks''.  Note that such \emph{task-universality} of foundation models does not necessarily imply adaptability across different \emph{machine learning formulations} (e.g., prediction, imputation, ATE, CATE, counterfactuals); instead, it can refer to adaptability across different \emph{contexts} for a given task. This perspective is widely embraced by recent studies, such as those focusing on foundation models for tabular datasets \cite{zhang2023towards}, time series \cite{garza2023timegpt,das2023decoder}, and knowledge graphs \cite{galkin2023towards}. These studies concentrate exclusively on a single type of task, but assess in-context generalization across datasets.

% \section{Extended Related Works}\label{app:extend-relate-work}
% As our work also intersects with the literature on neural causal estimation methods, we provide a discussion in this section.

\section{Omitted Proofs}\label{app:omit-proof}

\subsection{Derivations of Eq.~(\ref{eq:kom}) and Eq.~(\ref{eq:dual-svm})}
We first establish the conditional bias decomposition:
\begin{align*}
    & \E\left(\hat{\tau}-\tau_{SATE}\mid \{\vX_i, T_i\}_{i=1}^N\right) \\
    ={} & \E\left(\sum_{i=1}^N\alpha_iW_iY_i-\sum_{i=1}^N\frac1N\big(Y_i(1)-Y_i(0)\big)\mid \{\vX_i, T_i\}_{i=1}^N\right)\\
    ={} & \sum_{i=1}^N\alpha_iW_i\E\left(Y_i(T_i)\mid\vX_i, T_i\right) + \sum_{i=1}^N\frac1N\E\left(Y_i(1)-Y_i(0)\mid \vX_i, T_i\right)\\
    ={} & \sum_{i=1}^N\left(\alpha_iW_i\E\left(Y_i(0)\mid\vX_i\right) + \alpha_iT_i\E\left(Y_i(1)-Y_i(0)\mid\vX_i\right)\right)+ \sum_{i=1}^N\frac1N\E\left(Y_i(1)-Y_i(0)\mid \vX_i\right)
\end{align*}
\begin{align*}
    ={} & \sum_{i=1}^N(\alpha_i T_i -\frac1N)\E\left(Y_i(1)-Y_i(0)\mid \vX_i\right) + \sum_{i=1}^N \alpha_i W_i\E\left(Y_i(0)\mid \vX_i\right),
\end{align*}

where we use the assumption of consistency between observed and potential outcomes and non-
interference between unit (SUTVA, \cite{rubin1990comment}) in the second equation and unconfoundedness in the third equation.

Formally, define a feature map $\phi:\sX\rightarrow\gH_\phi$, where $\sX$ is the support of covariates and $\gH_\phi$ is some Hilbert space. The unit-ball RKHS is given by $\gF_\phi=\{f:\sX\rightarrow\sR\mid \exists\theta\in\gH_\phi,~s.t.~f(x)=\langle\theta,\phi(x)\rangle,~\forall x\in\sX~and~\|\theta\|\leq 1\}$. Recall that $\langle\cdot,\cdot\rangle$ denotes the inner product of Hilbert space $\gH_\phi$ and $\|\cdot\|$ denotes the associated norm. The adversarial upper bound of the square of the second term in the conditional bias can be calculated via
\begin{align*}
     \sup_{f\in\gF_\phi} \left(\sum_{i=1}^N\alpha_iW_if(\vX_i)\right)^2
    ={} & \sup_{\theta\in\gH_\phi,\|\theta\|\leq1} \left(\sum_{i=1}^N\alpha_iW_i\Big\langle\theta,\phi(\vX_i)\Big\rangle\right)^2 \\
    ={} & \sup_{\theta\in\gH_\phi,\|\theta\|\leq 1} \left(\Big\langle\theta,\sum_{i=1}^N\alpha_iW_i\phi(\vX_i)\Big\rangle\right)^2 \\
    \leq{} & \left\|\sum_{i=1}^N\alpha_iW_i\phi(\vX_i)\right\|^2 = \valpha^\top\mK_\phi\valpha.
\end{align*}
Recall that $[\mK_\phi]_{ij}=W_iW_j\langle\phi(\vX_i),\phi(\vX_j)\rangle$. Therefore minimizing this adversarial loss subject to $\valpha\in\sA$ reduces to Eq.~(\ref{eq:kom}). 

By evoking Theorem~1 in \cite{tarr2021estimating}, we have that Eq.~(\ref{eq:kom}) is equivalent to Eq.~(\ref{eq:dual-svm}) for some $\lambda\geq 0$. However, the exact value of $\lambda$ depends on $\mK_\phi$. For example, if $\mK_\phi$ is such that the minimum value of Eq.~(\ref{eq:kom}) is $0$, then $\lambda=0$. This is because the minimizer of Eq.~(\ref{eq:kom}) would also be the minimizer under the unnormalized constraint (Eq.~(\ref{eq:dual-svm}) with $\lambda=0$), as $\valpha^\top\mK_\phi\valpha\geq 0$ for any $\valpha\in\sR^{N}$. 

Conversely, we can also show that $\lambda>0$ if $\mK_\phi$ is of full rank.

\begin{lemma}\label{lm:1}
If $\mK_\phi$ if of full rank, then $\lambda>0$.
\end{lemma}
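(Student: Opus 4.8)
\textbf{Proof plan for Lemma~\ref{lm:1}.}

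The plan is to argue by contraposition: I will show that if $\lambda = 0$, then $\mK_\phi$ cannot be of full rank. Recall from the preceding derivation that $\lambda \geq 0$ is the multiplier for which the normalized problem Eq.~(\ref{eq:kom}), i.e., minimizing $\valpha^\top \mK_\phi \valpha$ over $\valpha \in \sA = \{\vzero \preceq \valpha \preceq \vone,\ \sum_{i\in\sT}\alpha_i = \sum_{i\in\sC}\alpha_i = 1\}$, is equivalent to the dual SVM Eq.~(\ref{eq:dual-svm}), i.e., minimizing $\valpha^\top \mK_\phi \valpha - 2\lambda \vone^\top\valpha$ subject only to $\vW^\top \valpha = 0$ and $\vzero \preceq \valpha \preceq \vone$. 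The case analysis in the excerpt already shows that $\lambda = 0$ forces the \emph{unnormalized} problem (with the looser constraint set that drops $\sum_{i\in\sT}\alpha_i = \sum_{i\in\sC}\alpha_i = 1$ and keeps only $\vW^\top\valpha=0$, $\vzero\preceq\valpha\preceq\vone$) to have a minimizer that also lies in $\sA$.

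First I would note that $\valpha = \vzero$ is feasible for the unnormalized problem ($\vW^\top\vzero = 0$, $\vzero \preceq \vzero \preceq \vone$) and achieves objective value $\vzero^\top \mK_\phi \vzero = 0$. Since $\mK_\phi \succeq 0$ (it is a Gram-type matrix: $\valpha^\top \mK_\phi \valpha = \|\sum_i \alpha_i W_i \phi(\vX_i)\|^2 \geq 0$), the value $0$ is in fact the global minimum of the unnormalized problem, attained at $\valpha = \vzero$. But when $\lambda = 0$ the equivalence with Eq.~(\ref{eq:kom}) says this minimizer must be attainable inside $\sA$ — or more directly, it says the minimum value of Eq.~(\ref{eq:kom}) over $\sA$ equals $0$ as well. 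Hence there exists $\valpha^\star \in \sA$ with $\valpha^{\star\top}\mK_\phi\valpha^\star = 0$, i.e., $\|\sum_i \alpha_i^\star W_i \phi(\vX_i)\|^2 = 0$, which gives a nonzero vector $\vu$ with entries $u_i = \alpha_i^\star W_i$ satisfying $\mK_\phi \vu = \vzero$: indeed $\vu \neq \vzero$ because $\sum_{i\in\sT}\alpha_i^\star = 1 > 0$ forces some $\alpha_i^\star \neq 0$, and $\vu^\top \mK_\phi \vu = \valpha^{\star\top}\mK_\phi\valpha^\star = 0$ together with $\mK_\phi \succeq 0$ gives $\mK_\phi\vu = \vzero$ (a PSD matrix annihilates any vector on which its quadratic form vanishes). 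Therefore $\mK_\phi$ is singular, contradicting full rank; so $\lambda > 0$.

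The step I expect to require the most care is pinning down exactly what "equivalent for some $\lambda \geq 0$" (Theorem~1 of \cite{tarr2021estimating}) delivers in the degenerate regime — specifically, confirming that $\lambda = 0$ genuinely implies the minimum of Eq.~(\ref{eq:kom}) over $\sA$ is $0$, rather than merely that the two \emph{problems} share some solution. The excerpt's own discussion handles the "forward" direction ($\min = 0 \Rightarrow \lambda = 0$); here I need the contrapositive of the converse, which amounts to asserting that the KKT system underlying Theorem~1 cannot produce $\lambda = 0$ unless the normalization constraints are slack at the optimum, and slackness of those equality constraints is exactly the statement that the unnormalized optimum (value $0$, at $\valpha=\vzero$ up to the affine shift) is reproduced inside $\sA$. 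Once that bookkeeping is in place, the positive-semidefiniteness argument producing the kernel vector $\vu$ is routine, and the contradiction with full rank is immediate.
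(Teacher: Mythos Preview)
Your contrapositive strategy matches the paper's: both argue that $\lambda=0$ forces some nonzero feasible $\valpha$ with $\valpha^\top\mK_\phi\valpha=0$, whence $\mK_\phi$ is singular. The paper shortcuts the step you flag as delicate by citing the proof of Theorem~1 in \cite{tarr2021estimating} directly: that proof exhibits a quantity
\[
q_*=\min_{\substack{\vW^\top\valpha=0,\ \vzero\preceq\valpha\preceq\vone,\ \valpha\neq\vzero}}\frac{\sqrt{\valpha^\top\mK_\phi\valpha}}{\vone^\top\valpha/2}
\]
and shows $\lambda=0$ only if $q_*=0$. Full rank of $\mK_\phi$ then gives positive definiteness, so $\valpha^\top\mK_\phi\valpha>0$ for every nonzero $\valpha$, forcing $q_*>0$ and hence $\lambda>0$. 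This is exactly the implication you were trying to extract from the ``equivalence'' statement, but obtained without re-deriving the KKT bookkeeping.

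One genuine slip in your writeup: the identity $\vu^\top\mK_\phi\vu=\valpha^{\star\top}\mK_\phi\valpha^\star$ with $u_i=\alpha_i^\star W_i$ is false. Since $[\mK_\phi]_{ij}=W_iW_j\langle\phi(\vX_i),\phi(\vX_j)\rangle$ and $W_i^2=1$, one gets $\vu^\top\mK_\phi\vu=\|\sum_i\alpha_i^\star\phi(\vX_i)\|^2$, whereas $\valpha^{\star\top}\mK_\phi\valpha^\star=\|\sum_i\alpha_i^\star W_i\phi(\vX_i)\|^2$; these differ in general. The detour through $\vu$ is unnecessary anyway: $\valpha^\star\in\sA$ is itself nonzero and, by the PSD argument you give, lies in $\ker\mK_\phi$, which is all you need.
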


\begin{proof}
    From the proof of Theorem~1 in \cite{tarr2021estimating}, we know that $\lambda=0$ only if \[q_*= \min_{\mW^\top\valpha=0,\\\vzero\preceq \valpha\preceq\vone, \valpha\neq\vzero}\frac{\sqrt{\valpha^\top\mK_\phi\valpha}}{\vone^\top\valpha/2}\]
    is zero. However, since $\mK_\phi$ is of full rank, it is positive definite. Thus for any $\valpha\neq 0$, there is $\valpha^\top\mK_\phi\valpha>0$. Therefore $q_*>0$. Consequently, $\lambda>0$.
\end{proof}

% This result is helpful when showing Theorem~\ref{thm:1}.

\subsection{Derivations of Eq.~(\ref{eq:primal-svm}) and Eq.~(\ref{eq:sv-expansion})}\label{app:a2}

The dual form of Eq.~(\ref{eq:primal-svm}) can be derived using its Lagrangian
\begin{align*}
L(\vbeta, \beta_0, \vxi, \valpha, \bar{\valpha}) & = \frac{\lambda}{2}\|\vbeta\|^2 +\sum_{i=1}^N\xi_i + \sum_{i=1}^N \alpha_i\left(1-\xi_i-W_i\Big(\big\langle \vbeta, \phi(\vX_i)\big\rangle+\beta_0\Big)\right)-\sum_{i=1}^N\bar{\alpha}_i\xi_i,
\end{align*}
where $\valpha\succeq\vzero$ and $\bar{\valpha}\succeq\vzero$. The primal form in Eq.~(\ref{eq:primal-svm}) can be obtained by $\min_{\vbeta, \beta_0,\xi_i}\max_{\valpha\succeq\vzero,\bar{\valpha}\succeq\vzero} L(\vbeta,\beta_0, \vxi, \valpha, \bar{\valpha})$. If we exchange $\min\max$ with $\max\min$, solving $\min_{\vbeta, \beta_0,\xi_i}$ by setting the derivatives to zero leads to
\begin{align*}
\nabla_\vbeta L(\vbeta,\beta_0,\xi,\valpha,\bar{\valpha}) & = \lambda\vbeta - \sum_{i=1}^N\alpha_iW_i\phi(\vX_i) = \vzero, \\
\nabla_{\beta_0} L(\vbeta,\beta_0,\xi,\valpha,\bar{\valpha}) & = -\sum_{i=1}^N \alpha_iW_i=0,\\
\nabla_{\xi_i}L(\vbeta,\beta_0,\xi,\valpha,\bar{\valpha}) & = 1-\alpha_i-\bar{\alpha}_i= 0,~\forall~i\in[N].
\end{align*}
Plugging these in $L(\vbeta,\beta_0,\xi,\valpha,\bar{\valpha})$, we can reduce $\max_{\valpha\succeq\vzero,\bar{\valpha}\succeq\vzero}\min_{\vbeta, \beta_0,\xi_i} L(\vbeta,\beta_0, \vxi, \valpha, \bar{\valpha})$ to Eq.~(\ref{eq:dual-svm}). Thus it is the dual form of Eq.~(\ref{eq:primal-svm}). 

In addition, we can also derive Eq.~(\ref{eq:sv-expansion}). It is easy to check that Slater's condition holds for the primal SVM problem in Eq.~(\ref{eq:primal-svm}). Thus it satisfies strong duality. Therefore any optimal solutions to the primal-dual problems must satisfy the KKT condition $\lambda\vbeta^*=\sum_{j=1}^N\alpha_j^*W_j\phi(\vX_j)$.

\subsection{Derivations of Eq.~(\ref{eq:attn-svm})}
From the Taylor expansion 
\begin{align*}    \exp(\vk_i^\top\vk_j/\sqrt{D})& =\sum_{l=0}^{+\infty}\frac{1}{l!}(\vk_i^\top\vk_j/\sqrt{D})^l \\
& = \sum_{l=0}^{+\infty}\sum_{N_1+...+N_D=l}\frac{\big([\vk_i]_1^{N_1}...[\vk_i]_D^{N_D}\big)\big([\vk_j]_1^{N_1}...[\vk_j]_D^{N_D}\big)}{D^{l/2}N_1!...N_D!},
\end{align*}
we have that $\exp(\vk_i^\top\vk_j/\sqrt{D})=\langle \phi(\vX_i),\phi(\vX_j)\rangle$ if 
\begin{equation}\label{eq:feature-func}
   \phi(\vx) = \left(\frac{[\vk]_1^{N_1}...[\vk]_D^{N_D}}{D^{l/2}(N_1!...N_D!)^{1/2}} \right)_{N_1+...+N_D=l,~l\in\sN}. 
\end{equation}
Here $\vk$ denotes the key embedding of $\vx$ following the same transformation that $\vk_i$ is obtained from $\vX_i$. Note that we allow the transformation to depend on $\mX$, which corresponds to a data-dependent kernel.

Using this expression, the $i$-th output of the self-attention layer when $\mQ=\mK$ can be equivalently written as
\begin{align*}
\sum_{j=1}^N \frac{\exp\big(\vk_i^\top\vk_j/\sqrt{D}\big)}{\sum_{j'=1}^{N}\exp\big(\vk_i^\top\vk_{j'}/\sqrt{D}\big)}v_j = \sum_{j=1}^N \frac{\langle \phi(\vX_i),\phi(\vX_j)\rangle}{h(\vX_i)} v_i = \sum_{j=1}^N \frac{v_j}{h(\vX_j)}\langle\phi(\vX_j), \phi(\vX_i)\rangle.
\end{align*}

\subsection{Proof of Theorem~\ref{thm:1}}

We first state its formal version:
\newtheoremstyle{named}{}{}{\itshape}{}{\bfseries}{.}{.5em}{\thmnote{#3}}
\theoremstyle{named}
\newtheorem*{namedtheorem}{Theorem}
\begin{namedtheorem}[Theorem 1]
If the covariates $\vX$ satisfy that $\phi(\vX_1),...,\phi(\vX_N)$ are linearly independent, then Algorithm~\ref{alg:single-dataset} recovers the optimal balancing weight at the global minimum of the penalized hinge loss in Eq.~(\ref{eq:loss-func}). 

In particular, the optimal solution $\valpha^*$ to Eq.~(\ref{eq:kom}), in which the feature function $\phi$ is defined using the optimal neural network parameters via Eq.~(\ref{eq:feature-func}), can be obtained using the optimal neural network parameters that minimize Eq.~(\ref{eq:loss-func}) via $\alpha_j^*=\lambda v_j/h(\vX_j)W_j$.
\end{namedtheorem}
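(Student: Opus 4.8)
The plan is to chain together the primal--dual equivalences established in Sections~\ref{sec:3-1} and~\ref{sec:3-2}, carefully tracking how the learned feature map $\phi$ (built from the neural-network key embeddings via Eq.~(\ref{eq:feature-func})) enters at each stage, and then to argue that minimizing the loss in Eq.~(\ref{eq:loss-func}) is genuinely equivalent to solving the primal SVM of Eq.~(\ref{eq:primal-svm}) rather than merely being upper-bounded by it. First I would fix the optimal network parameters $\vtheta^*$; these induce key vectors $\vk_i$, hence a feature map $\phi$ via Eq.~(\ref{eq:feature-func}), with $\langle\phi(\vX_i),\phi(\vX_j)\rangle=\exp(\vk_i^\top\vk_j/\sqrt D)$. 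The regularity hypothesis---$\phi(\vX_1),\dots,\phi(\vX_N)$ linearly independent---makes $\mK_\phi$ full rank, so by Lemma~\ref{lm:1} the associated dual-SVM parameter $\lambda$ is strictly positive; this is exactly what is needed to write the support-vector expansion Eq.~(\ref{eq:sv-expansion}) and to divide by $\lambda$ when reading off $\alpha_j^*=\lambda v_j/h(\vX_j)W_j$.

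Next I would make precise the reduction of the loss to the penalized hinge loss. By the parametrization $\vbeta=\sum_{j}\frac{v_j}{h(\vX_j)}\phi(\vX_j)$ the first term of $\Ls_\vtheta$ equals $\frac{\lambda}{2}\|\vbeta\|^2$ (using the closed-form inner-product identity quoted after Eq.~(\ref{eq:loss-func})), and the second term equals $\sum_i[1-W_i(\langle\vbeta,\phi(\vX_i)\rangle+\beta_0)]_+$, since the $i$-th coordinate of $\softmax(\mK\mK^\top/\sqrt D)\mV$ is exactly $\langle\vbeta,\phi(\vX_i)\rangle$ by Eq.~(\ref{eq:attn-svm}). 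Here the key point I must establish is a \emph{surjectivity} claim: as $\mV$ (equivalently $\vtheta$) ranges over its domain, the induced vector $\vbeta=\sum_j\frac{v_j}{h(\vX_j)}\phi(\vX_j)$ ranges over all of $\mathrm{span}\{\phi(\vX_1),\dots,\phi(\vX_N)\}$, and the optimal primal $\vbeta^*$ of Eq.~(\ref{eq:primal-svm}) lies in this span (immediate from the KKT stationarity $\lambda\vbeta^*=\sum_j\alpha_j^*W_j\phi(\vX_j)$ derived in Appendix~\ref{app:a2}). Linear independence of the $\phi(\vX_j)$ guarantees the map $(v_1,\dots,v_N)\mapsto\vbeta$ is a bijection onto that span, so minimizing $\Ls_\vtheta$ over the value parameters is equivalent to minimizing the penalized hinge loss over $\vbeta$ in the span, which in turn---being an unconstrained reformulation of Eq.~(\ref{eq:primal-svm}) \cite{hastie2009elements}---recovers $\vbeta^*$.

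Having pinned down $\vbeta^*$, I would invoke strong duality (Slater's condition holds for Eq.~(\ref{eq:primal-svm}), as noted in Appendix~\ref{app:a2}) together with the KKT stationarity to identify the dual optimizer: $\lambda\vbeta^*=\sum_j\alpha_j^*W_j\phi(\vX_j)$ and, by linear independence, the coefficients $\alpha_j^*W_j$ are uniquely determined, namely $\alpha_j^*W_j=\lambda v_j^*/h(\vX_j)$, i.e. $\alpha_j^*=\lambda v_j^*/(h(\vX_j)W_j)$ since $W_j\in\{\pm1\}$. Finally, the derivation around Eq.~(\ref{eq:kom})--(\ref{eq:dual-svm}) shows this $\valpha^*$ is precisely the minimizer of $\valpha^\top\mK_\phi\valpha$ over $\sA$, the optimal covariate-balancing weight for the RKHS defined by the learned $\phi$, which is the assertion of the theorem; plugging into $\hat\tau=(\valpha^*\mW)^\top\mY$ gives the ATE estimate. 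The main obstacle I anticipate is the surjectivity/bijection argument in the second paragraph: one must check that the denominators $h(\vX_j)$ are strictly positive (true, since they are sums of exponentials) and that no degeneracy in the softmax normalization obstructs the correspondence $\mV\leftrightarrow\vbeta$, and one must be honest that the global minimizer of $\Ls_\vtheta$ over \emph{all} of $\vtheta$ (not just $\mV$) is what realizes the optimal $\phi$---so the statement is really about a joint minimum, and the "mild regularity" is the hypothesis that at that joint minimum the resulting $\phi(\vX_i)$ remain linearly independent.
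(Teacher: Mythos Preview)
Your proposal is correct and follows essentially the same route as the paper's proof: parametrize $\vbeta=\sum_j\frac{v_j}{h(\vX_j)}\phi(\vX_j)$, identify $\Ls_\vtheta$ with the penalized hinge loss, argue that at the global minimum $\vbeta$ solves the primal SVM in Eq.~(\ref{eq:primal-svm}), apply KKT stationarity $\lambda\vbeta^*=\sum_j\alpha_j^*W_j\phi(\vX_j)$, and use linear independence of the $\phi(\vX_j)$ to equate coefficients. You are in fact more explicit than the paper on two points it leaves implicit---the surjectivity of $\mV\mapsto\vbeta$ onto $\mathrm{span}\{\phi(\vX_j)\}$ (so that minimizing over $\mV$ truly recovers the primal optimum) and the invocation of Lemma~\ref{lm:1} to guarantee $\lambda>0$---both of which strengthen the argument without altering its structure.
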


\begin{proof}
    Denote $\vbeta=\sum_{j=1}^N\frac{v_j}{h(\vX_j)}\phi(\vX_j)$, then using Eq.~(\ref{eq:attn-svm}), we can rewrite the loss function in Eq.~(\ref{eq:loss-func}) as
    \[
    \Ls_\vtheta(\sD) = \frac{\lambda}{2}\|\vbeta\|^2 + \sum_{i=1}^N\left[1 -W_i\big(\langle \vbeta, \phi(\vX_i)\rangle+\beta_0\big)\right]_{+}.
    \]
    Denote $\xi_i=\left[1 -W_i\big(\langle \vbeta, \phi(\vX_i)\rangle+\beta_0\big)\right]_{+}$, then minimizing $\Ls_\vtheta(\sD)$ can be equivalently written as
    \begin{align*}
        \min_{\vtheta}&\quad \frac\lambda2\|\vbeta\|^2 + \sum_{i=1}^N\xi_i,\\
        s.t.&\quad W_i\left(\big\langle\vbeta, \phi(\vX_i)\big\rangle + \beta_0\right)\geq 1-\xi_i,\quad \xi_i\geq 0,\quad \forall i\in [N].
    \end{align*}
    Thus at the optimal $\vtheta$, the corresponding $\vbeta$ is also the optimal solution to  
    \begin{align*}
        \min_{\vbeta, \beta_0, \vxi}&\quad \frac\lambda2\|\vbeta\|^2 + \sum_{i=1}^N\xi_i,\\
        s.t.&\quad W_i\left(\big\langle\vbeta, \phi(\vX_i)\big\rangle + \beta_0\right)\geq 1-\xi_i,\quad \xi_i\geq 0,\quad \forall i\in [N],
    \end{align*}   
    where $\phi$ is defined using the optimal $\vtheta$. This recovers the primal SVM problem. By the primal-dual connection proven in Appendix~\ref{app:a2}, if we denote the optimal solution to the dual problem (which is Eq.~(\ref{eq:dual-svm})) as $\valpha^*$, we have
    \[
    \lambda\vbeta = \sum_{j=1}^N\alpha_j^*W_j\phi(\vX_j).
    \]
    Consequently, by the definition of $\vbeta$, we have
    \[
     \sum_{j=1}^N \frac{\lambda v_j}{h(\vX_j)}\phi(\vX_j) = \sum_{j=1}^N\alpha_j^*W_j\phi(\vX_j).
    \]
    By the assumption that $\phi(\vX_1),...,\phi(\vX_N)$ are linearly independent, we must have $\frac{\lambda v_j}{h(\vX_j)}=\alpha_j^*W_j$ for all $j\in[N]$. Therefore $\alpha_j^*={\lambda v_j}/{h(\vX_j)W_j}$. 
\end{proof}

\begin{remark}
    Note that when $\phi(\vX_1),...,\phi(\vX_N)$ are linearly independent, the matrix $$\mK_\phi = [W_1\phi(\vX_1),...,W_N\phi(\vX_N)]^\top[W_1\phi(\vX_1),...,W_N\phi(\vX_N)]$$ is of full rank. Thus by Lemma~\ref{lm:1}, there is $\lambda>0$. 
    Conversely, using a similar decomposition, we know that if $\hat{\mK}_\phi=[\phi(\vX_1),...,\phi(\vX_N)]^\top[\phi(\vX_1),...,\phi(\vX_N)]$ is of full rank, then $\phi(\vX_1),...,\phi(\vX_N)$ are linearly independent. Since $\hat{\mK}_\phi = \exp(\mK\mK^\top/\sqrt{D})$, we have $\phi(\vX_1),...,\phi(\vX_N)$ linearly independent if $\mK$ is of row rank $N$. Thus the assumption on $\vX$ in Theorem~\ref{thm:1} is satisfied when  $\mK$ is of row rank $N$.
\end{remark}

{We also remark here that there are different theories relating attentions to SVMs. Our work rewrites self-attention via an SVM expansion and explicitly designs the loss function to make sure self-attention recovers the SVM that solves optimal covariate balancing for causal inference. \cite{tarzanagh2023transformers} showed that the optimization geometry of self-attention converges in direction to an SVM solution.
}

\section{Alternative Objectives}\label{app:alternative-obj}

{There are different approaches to balance covariates in order to estimate treatment effects. In the main text, we resort to bounding the first term in the conditional bias, i.e., the terms involving the potential outcome under control. This corresponds to minimizing the bias induced by the imbalance of prognostic score \cite{hansen2008prognostic, tarr2021estimating}. It was shown in \cite{hansen2008prognostic} that this estimation is valid and unbiased as long as there is no effect modification. Therefore in these scenarios, the conditional bias vanishes as long as the first term converges to zero. On the contrary, when there is effect modification, we now provide an alternative balancing objective that minimizes for both terms.}

Consider minimizing the square of both terms in the conditional bias, which we decompose into the following form
\begin{equation}\label{eq:square-conditional-bias}
    \begin{aligned}
    & \left(\E\left(\hat{\tau}-\tau_{SATE}\mid \{\vX_i,T_i\}_{i=1}^N\right)\right)^2 \\
    = &  \left(\sum_{i=1}^N \alpha_iW_i\E\big(Y_i(T_i)|\vX_i,T_i\big) - \frac1N\sum_{i=1}^N\Big(\E\big(Y_i(1)|\vX_i\big)-\E\big(Y_i(0)|
    \vX_i\big)\Big)\right)^2.
\end{aligned}
\end{equation}
Denote the outcome models $\E(Y_i(1)|\vX_i)=f_1(\vX_i)$ and $\E(Y_i(0)|\vX_i)=f_0(\vX_i)$. We choose to minimize the above term in worst case over all possible potential outcome models $(f_0,f_1)\in \gF_\phi^2$. Here the space $\gF_\phi^2$ is defined as $\gF_\phi^2=\{(f_0,f_1)\mid f_0\in\gF_\phi, f_1\in\gF_\phi\}$. 

Suppose $f_0(x)=\langle \phi(x),\theta_0\rangle$ and $f_1(x)=\langle\phi(x),\theta_1\rangle$ for $\theta_0,\theta_1\in\gH_\phi,\|\theta_0\|\leq 1, \|\theta_1\|\leq 1$. We can bound Eq.~(\ref{eq:square-conditional-bias}) with respect to all outcome models in $\gF_\phi^2$ as
\begin{align*}
    & \left(\sum_{i=1}^N \alpha_iW_if_{T_i}(\vX_i) -  \frac1N\sum_{i=1}^N\big(f_1(\vX_i)-f_0(\vX_i)\big)\right)^2 \\
    ={} &\left(\left\langle \sum_{i\in\sT}\alpha_iW_i\phi(\vX_i)-\frac1N\sum_{i\in[N]}\phi(\vX_i),\theta_1\right\rangle + \left\langle \sum_{i\in\sC}\alpha_iW_i\phi(\vX_i)+\frac1N\sum_{i\in[N]}\phi(X_i),\theta_0\right\rangle\right)^2  \\
    \leq{} & 2 \left(\sum_{i\in\sT}\alpha_iW_i\phi(\vX_i)-\frac1N\sum_{i\in[N]}\phi(\vX_i)\right)^2 + 2 \left(\sum_{i\in\sC}\alpha_iW_i\phi(\vX_i)+\frac1N\sum_{i\in[N]}\phi(\vX_i)\right)^2
\end{align*}
where the inequality uses Cauchy-Schwartz inequality. Minimizing this upper bound subject to $\valpha\in\sA$ is equivalent to solving
\begin{equation}\label{eq:adv-minmax-2}
    \begin{aligned}
    \min_{\valpha}&\quad \valpha^\top \mG_\phi\valpha + \valpha^\top \vg_\phi, \\
    s.t. &\quad \sum_{i\in \sT}\alpha_i=\sum_{i\in\sC}\alpha_i=1,\quad \vzero\preceq \valpha \preceq \vone.
    \end{aligned}
\end{equation}
Here
\begin{align*}
    [\mG_\phi]_{i,j} & = \delta_{W_i=W_j} \langle \phi(\vX_i), \phi(\vX_j)\rangle, \\
    [\vg_\phi]_i & = -\frac2N \sum_{j=1}^N \langle\phi(\vX_i), \phi(\vX_i)\rangle.
\end{align*}
It is easy to show that $\mG_\phi\succeq 0$ as it can be decomposed into two submatrixes which are positive semi-definite. In addition, as $\langle \phi(\vX_i),\phi(\vX_j)\rangle=\exp(\vk_i^\top\vk_j/\sqrt{D})>0$, we know that $\vg_\phi\prec\vzero$.

To come up with a consistent gradient-based solver, notice first that Eq.~(\ref{eq:adv-minmax-2}) is  equivalent to the following unnormalized problem for some $\lambda,\mu\geq 0$
\begin{equation}\label{eq:dual-svm-2}
\begin{aligned}
    \min_{\valpha}&\quad \valpha^\top \mG_{\phi} \valpha + 2\mu\cdot \vg_\phi^\top\valpha - 2\lambda \cdot \vone^\top\valpha,\\
s.t. &\quad \mW^\top\valpha=0, \quad \vzero\preceq\valpha\preceq\vone.
\end{aligned}
\end{equation}
This can be shown similarly to the proof of Theorem~1 in \cite{tarr2021estimating}. We escape the details but provide the following main steps:
\begin{enumerate}
    \item We first show that for some $\epsilon_\lambda,\epsilon_\mu\geq 0$, Eq.~(\ref{eq:dual-svm-2}) is equivalent to
    \begin{align*}
        \min_{\valpha}&\quad \valpha^\top \mG_\phi\valpha, \\
        s.t. &\quad \mW^\top\valpha=0, \quad \vzero\preceq\valpha\preceq\vone,\quad -\vg_\phi^\top\valpha\geq\epsilon_\mu,\quad \vone^\top\valpha\geq\epsilon_\lambda.
    \end{align*}
    \item Next, we show that the above problem is equivalent to
    \begin{align*}
        \min_{\valpha}&\quad \sqrt{\valpha^\top \mG_\phi\valpha}, \\
        s.t. &\quad \mW^\top\valpha=0, \quad \vzero\preceq\valpha\preceq\vone,\quad -\vg_\phi^\top\valpha\geq\epsilon_\mu,\quad \vone^\top\valpha\geq\epsilon_\lambda,
    \end{align*}
    which is equivalent to 
    \begin{align*}
        \min_{\valpha}&\quad \sqrt{\valpha^\top \mG_\phi\valpha}+\nu_\mu\cdot\vg_\phi^\top\valpha-\nu_\lambda\vone^\top\alpha, \\
        s.t. &\quad \mW^\top\valpha=0, \quad \vzero\preceq\valpha\preceq\vone.
    \end{align*}
    for some $\nu_\lambda,\nu_\mu\geq 0$.
    \item For some $\lambda\geq 0$, the above problem is equivalent to
    \begin{align*}
        \min_{\valpha}&\quad \frac{\sqrt{\valpha^\top \mG_\phi\valpha}+\nu_\mu\cdot\vg_\phi^\top\valpha}{\vone^\top\valpha}, \\
        s.t. &\quad \mW^\top\valpha=0, \quad \vzero\preceq\valpha\preceq\vone.
    \end{align*}
    Since this problem is scale-free, it is equivalent to
      \begin{align*}
        \min_{\valpha}&\quad \frac{\sqrt{\valpha^\top \mG_\phi\valpha}+\nu_\mu\cdot\vg_\phi^\top\valpha}{\vone^\top\valpha}, \\
        s.t. &\quad \sum_{i\in \sT}\alpha_i=\sum_{i\in\sC}\alpha_i=1, \quad \vzero\preceq\valpha\preceq\vone,
    \end{align*}  
    i.e., 
    \begin{align*}
        \min_{\valpha}&\quad {\sqrt{\valpha^\top \mG_\phi\valpha}+\nu_\mu\cdot\vg_\phi^\top\valpha}, \\
        s.t. &\quad \sum_{i\in \sT}\alpha_i=\sum_{i\in\sC}\alpha_i=1, \quad \vzero\preceq\valpha\preceq\vone,
    \end{align*}  
    \item Using similar arguments as above, one can show the above problem is equivalent to 
      \begin{align*}
        \min_{\valpha}&\quad \valpha^\top \mG_\phi\valpha+{\vg_\phi^\top\valpha}, \\
        s.t. &\quad \sum_{i\in \sT}\alpha_i=\sum_{i\in\sC}\alpha_i=1, \quad \vzero\preceq\valpha\preceq\vone,
    \end{align*}     
    for some $\mu\geq0$.
\end{enumerate}

The primal form of Eq.~(\ref{eq:dual-svm-2}) can be written as
\begin{align*}
    \min_{\vbeta_1,\vbeta_2,\beta_0, \vxi}&\quad \frac12\|\vbeta_1\|^2+ \frac12\|\vbeta_2\|^2+ \sum_{i=1}^N\xi_i,\\
    s.t.&\quad \left(\big\langle\vbeta_1, \phi(\vX_i)\big\rangle + \beta_0\right)\geq \lambda-\mu[\vg_\phi]_i-\xi_i,\quad \forall i\in\sT\\
    &\quad \left(\big\langle\vbeta_2, \phi(\vX_i)\big\rangle -\beta_0\right)\geq \lambda-\mu[\vg_\phi]_i-\xi_i,\quad \forall i\in\sC\\
    &\quad \xi_i\geq 0,\quad \forall i\in [N].
\end{align*}
Following similar derivations in Appendix~\ref{app:omit-proof}, we can write out an unconstrained loss function
\begin{align*}
\Ls_\vtheta(\sD)=&\frac12\left\|\sum_{j\in\sT} \frac{v_j}{h(\vX_j)}\phi(\vX_j)\right\|^2 + \frac12\left\|\sum_{j\in\sC} \frac{v_j}{h(\vX_j)}\phi(\vX_j)\right\|^2\\&+ \left[\lambda -\mu [\vg_\phi]_\sT-\big(\softmax(\mK_\sT\mK_\sT^\top/\sqrt{D})\mV_\sT+\beta_0\big)\right]_{+} \\& +  \left[\lambda -\mu [\vg_\phi]_\sC-\big(\softmax(\mK_\sC\mK_\sC^\top/\sqrt{D})\mV_\sC-\beta_0\big)\right]_{+},
\end{align*}
where the optimal $\valpha^*$ solving Eq.~(\ref{eq:adv-minmax-2}) can be read off as $\alpha_i=\frac{v_i}{h(\vX_i)}$. 

For the conditional mean square error, under regularity constraints in \cite{bennett2019policy}, we can also use the same upper bound as above (up to an additive $\gO(1/N)$ gap). Therefore the same derivation holds. However, as this loss function separates the treated group from the control group aside from sharing the constant intercept $\beta_0$, it might not be preferable than the objective proposed in the main text.

\section{Non-binary Treatments}\label{app:non-binary-treatment}
Consider a generalization to the setting in Section~\ref{sec:3-1}, where the dataset $\sD=\{(\vX_i,\vT_i,Y_i)\}_{i\in[N]}$ in which $\vT_i$ is a $S$-dimensional vector of multiple binary treatments. Let $Y_i^s(t)$ be the potential outcome of assigning treatment $[\vT_i]_s=t$.

Assuming SUTVA ($Y_i=Y_i^s([\vT_i]_s)$) and unconfoundedness. Denote $\sT^s=\{i\in[N]: [\mT_i]_s=1\}$ and $\sC^s=\{i\in[N]: [\mT_i]_s=0\}$. We consider weighted estimators in the form of
\[
\hat{\tau}^s = \sum_{i\in\sT^s}\alpha_iY_i^s(1) - \sum_{i\in\sC^s}\alpha_iY_i^s(0)
\]
for the sample average treatment of the $s$-th treatment
\[
\tau^s_{SATE} = \frac1N\sum_{i=1}^N \big(Y_i^s(1)-Y_i^s(0)\big).
\]

Following the same derivations in Section~\ref{sec:3} and Appendix~\ref{app:omit-proof}, we can obtain a dual-SVM formulation to optimize $\valpha$ in the adversarial case. This dual-SVM formulation can then be transformed into its primal problem. As self-attention is implicitly implementing the predictor in the primal problem, we can then read off the optimal $\valpha^*$ by training this self-attention-based neural network with a penalized hinge loss. 

However, as we would like to evaluate the sample average treatment for multiple treatments, we can actually aggregate $S$ SVM problems together using the flexibility of self-attention layers. Namely, instead of consider a one-dimensional value vector $\mV$ in Section~\ref{sec:3-2}, we use $\mV\in\sR^{N\times S}$, where the $s$-th dimension corresponds to the $s$-th treatment. By minimizing the following loss function
\[
    \Ls_\vtheta (\sD) = \frac{\lambda}{2}\sum_{s=1}^S\left\|\sum_{j=1}^N \frac{[\mV]_{js}}{h(\vX_j)}\phi(\vX_j)\right\|^2 + \sum_{s=1}^S\left[\vone -\vW_{:,s}\big(\softmax(\mK\mK^\top/\sqrt{D})\mV_{:,s}+\beta_0\big)\right]_{+},
\]
we can read off the optimal balancing weight $\valpha$ for the $s$-th treatment via $\lambda\cdot\mV_{:,s}/h(\vX)\vW_{:,s}$

{\section{Individual Treatment Effect Estimation}\label{app:counterfactual}}
In this section, we further consider the problem of estimating {individual treatment effect (ITE)} in the binary treatment setup of Section~\ref{sec:3}. {Here we present one possible algorithmic approach to approximate ITEs with CInA.} Without loss of generality, suppose $T_1=1$ and we would like to estimate ITE on the first unit {$\E(Y_1(1)-Y_1(0)\mid \mX_1)$}.

Denote the ``counterfactual dataset'' by replacing the first sample with $(\vX_1, 0, \hat{Y}_1(0))$ as $\hat{\sD}$, where $\hat{Y}_1(0)$ is a realization of $Y_1(0)$. Note that we do not have access to the value of $\hat{Y}_1(0)$. However, we do have access to the covariates and treatments of $\hat{\sD}$. As these are all the required inputs to Algorithm~\ref{alg:single-dataset}, we can compute the optimal balancing weight for this counterfactual dataset $\sD$, which we denote as $\hat{\valpha}$. 

Notice that the sample average treatments of $\sD$ are $\hat{\sD}$ should be the same, as they are defined for the same set of units. Therefore the two weighted estimators are approximating the same $\tau_{SATE}$ (or ATE when $N$ increases) and thus
\begin{align*}
& \sum_{i\in\sT} \alpha_i \E(Y_i(1)\mid \mX_i) - \sum_{i\in\sC} \alpha_i \E(Y_i(0)\mid\mX_i)\\
\approx& \sum_{i\in\sT\setminus\{1\}} \hat{\alpha}_i \E(Y_i(1)\mid\mX_i) - \sum_{i\in\sC} \hat{\alpha}_i \E(Y_i(0)\mid\mX_i) - \hat{\alpha}_0\E(\hat{Y}_1(0)\mid \mX_1).
\end{align*}
Therefore we have the following approximation
\[
        \hat{\alpha}_1\E(\hat{Y}_1(0)\mid \mX_1) \approx -\alpha_1Y_1(1)+\sum_{i\in\sT\setminus\{1\}} (\hat{\alpha}_i-\alpha_i) Y_i(1) -\sum_{i\in\sC} (\hat{\alpha}_i-\alpha_i) Y_i(0).
\]

As we have access to all individual terms on the right, we can compute an approximation of $\E(Y_1(0)\mid\mX_1)$, using this formula as long as $\hat{\alpha}_0\neq 0$.\footnote{Once we have these estimands, policy evaluation can done via plug-in estimations.} 

{To enhance the robustness of this estimation, we can also compute this for units with covariates closed to $\mX_1$, e.g., using KNNs \cite{devroye1994strong,li2009nonparametric}, which would give consistent estimations for conditional expectations. Algorithm~\ref{alg:cate-inference} summarizes this procedure, where Algorithm~\ref{alg:multi-dataset-inference} can be used instead of Algoritm~\ref{alg:single-dataset} to estimate ITE in a zero-shot fashion.

\vspace*{-6pt}
\begin{algorithm}[H]
\begin{algorithmic}[1]
\caption{{CInA for ITE.}}\label{alg:cate-inference}
    \State {\textbf{Input}: Covariates $\mX$ and treatments $\mW$.}
    \State {\textbf{Output}: Estimation of $\E(Y_1(1)-Y_1(0)\mid \mX_1)$.}
    \State {Hyper-parameter: penalty weight $\lambda>0$.}
    \State {Initialize $\tau=\varnothing$.}
    \State \textbf{for} {unit $i$ with $\mX_i\approx\mX_1$}
        \State\hspace{10pt}{Run Algorithm~\ref{alg:single-dataset} on $\mX,\mW$ to obtain $\valpha$.}
        \State\hspace{10pt}{Set $\hat{\mW}$ to be $\mW$ except $\hat{W}_i=-W_i$.} 
        \State\hspace{10pt}{Run Algorithm~\ref{alg:single-dataset} on $\mX,\hat{\mW}$ to obtain $\hat{\valpha}$.}
        \State\hspace{10pt}{Let $\hat{\alpha}_i\E(\hat{Y}_i(1-T_i)\mid \mX_i)= -\alpha_iY_i(T_i)+\sum_{j\neq i, T_j=T_i} (\hat{\alpha}_j-\alpha_j) Y_j(T_j) -\sum_{T_j\neq T_i} (\hat{\alpha}_j-\alpha_j) Y_j(T_j)$.}
        \State\hspace{10pt}{Append $W_i\cdot(\E(\hat{Y}_i(1-T_i)\mid \mX_i)-Y_i(T_i))$ to $\tau$ if $\hat{\alpha}_i\neq 0$.}
    \State \textbf{end for}
    \State {\textbf{return} Average of $\tau$.}
\end{algorithmic}
\end{algorithm}
}

\section{Dataset Details} \label{app:data}
The details of the datasets for simulation A are provided in Section~\ref{sec:sim_A}. We now provide the details of {ER-5000} and the real-world datasets. Code for downloading and pre-processing these datasets will be provided upon publication.

{
\textbf{ER-5000.} Each of the ER-5000 datasets is generated following the structural causal model (SCM) framework. The detailed procedure is as follows. First, we sample a random directed acyclic graph (DAG) from the Erd\H{o}s-R\'{e}nyi random graph model \cite{erdHos1959renyi} with edge probability sampled from 0.25 to 0.5. Then, Based on the sampled DAG, we sample the corresponding functional relationships using a linear weight sampler, with random weights sampled from a uniform distribution between 0 and 3. Next, a treatment node and effect node is randomly chosen. For each non-treatment node, we use additive gaussian random noise with standard deviation randomly sampled uniformly between 0.2 and 2. For treatment node, we specify a Bernoulli distribution with logit equal to the functional output of the corresponding node.  Finally, we simulate each variable (in $\vX$, $T$ and $Y$) using the sampled DAG, functional relationships, and noises.
}

\textbf{IHDP and IHDP-resampled.} The Infant Health and Development Program (IHDP) dataset is a semi-dataset complied by \cite{hill2011bayesian}. We use the existing versions from \cite{chernozhukov2022riesznet}, which are sampled using the outcome model implemented as setting A in \cite{dorie2016npci}. Each dataset comprises of $747$ units and $25$ covaraites measuring the aspects of children and their mothers. For IHDP, the treatment group ($139$ out of $747$ units) has been made imbalanced by removing a biased subset of the treated population. A total of $1000$ datasets are used (following \cite{shi2019adapting}), where different datasets only differ in terms of outcome values. For IHDP-resampled, $100$ datasets are used where the treatments are resampled by setting the propensity score to “True” in the \cite{dorie2016npci}.

\textbf{Twins.} Introduced by \cite{louizos2017causal}, this is a semi-synthetic dataset based on the real data on twin births and twin mortality rates in the US from 1989 to 1991 \cite{almond2005costs}. The treatment is ``born the heavier twin'', which is simulated as a function of the GESTAT10 covariates. Therefore this dataset is confounded. After assigning the treatment for each pair of twins, the dataset is constructed by hiding the other twin. We downloaded the dataset and processed it following \cite{neal2020realcause}.

\textbf{LaLonde CPS and PSID.} We also use the datasets from \cite{lalonde1986evaluating}, in which the treatment is job training and the outcomes are income and employment status after training. The ground-truth average treatment effect is computed using a randomized study, where we use the observational data to estimate it. The observational data has multiple versions. We use both the PSID-1 and CPS-1 versions for our experiments \cite{dehejia1999causal}.

\textbf{ACIC.} The data for the 2018 Atlantic Causal Inference Conference competition (ACIC) \cite{shimoni2018benchmarking} comprises of serveral semi-synthetic datasets derived from the linked birth and infant death (LBIDD) data \cite{macdorman1998infant}. The data-generating process is described in \cite{shimoni2018benchmarking}. In our experiment, we {use datasets containing $1k$ or $10k$ samples}.\footnote{{In datasets with large sample sizes, techniques for efficient transformers \cite{child2019generating,kitaev2020reformer,katharopoulos2020transformers,sun2023retentive} can be applied to accelerate our method.}} {In the experiments in Section~\ref{sec:5}, a total of $293$ datasets (each of size $1k$) were used, where $93$ were left out for testing. In Appendix~\ref{app:add-emp-result}, we extend this to datasets of size $10k$, where a total of $288$ datasets were used and $88$ among these were left out for testing.}
We use datasets with polynomial link function for training and validation. For testing, we use datasets with exponential link functions thus creating a harder task for evaluating our methods. 

\section{Implementation Details}\label{app:implementation}

{Code for our method can be found at \url{https://github.com/microsoft/causica/tree/main/research_experiments/cina}.} Below we describe the architecture, hyper-parameters, training procedures and other details of our method. We also provide the implementation details of the baselines. Finally, we discuss a new data augmentation technique that we observe to be helpful on certain datasets.

\subsection{CInA}\label{app:h1}

\textbf{Pre-processing and Padding.} For Algorithm~\ref{alg:multi-dataset}, we might encounter multiple datasets with different number of samples. We  wish them to share the same transformation from $\mW,\mK$ to $\mV\in\sR^{N\times 1}$, where $N$ is the number of units in the corresponding dataset. For this, we adopt similar pre-processing steps as in natural language. We pad all datasets to the same size (i.e., adding dumy units to smaller datasets) and save the masks that indicate these paddings. During back-propagation, we use this mask to make sure that the loss function is only computed using actual units.

\textbf{Model Configurations.} We describe the architecture used in Algorithm~\ref{alg:multi-dataset}, as the single-dataset version uses the same components aside from parametrizing the values $\mV$ directly as learnable parameters. An illustration of the forward pass is provided in Figure~\ref{fig:cina-multi-illustration}. 

For the transformation from covariates $\vX$ to keys $\mK$, we implemented two versions: (1) an identical mapping followed by a batch-norm layer $\mK=\mathrm{bn}{(\vX)}$, (2) a projected mapping followed by a batch-norm layer $\vk_i=\mathrm{bn}\circ\mathrm{relu}\circ\mathrm{linear}(\vX_i)$. In our first simulation study in Section~\ref{sec:sim_A}, we observe that the projection to be marginally helpful and thus report all the results based on the identical mapping. 

For the transformation from $\mW,\mK$ to $\mV$, we first embed $\mW_i,\vk_i$ into a $32$-dimensional space using one layer of $\mathrm{relu}\circ\mathrm{linear}(\cdot)$. These two $32$-dimensional vectors are then concatenated into a $64$-dimensional vector following by a batch-norm layer. Denote these $64$-dimensional embedding for each unit as $\mE=[\ve_1,...,\ve_N]^\top$. We encode them into $N\times 1$-dimensional outputs $\mO$ using a scaled product attention with value, key, query being linear transformations of $\mE$. Notice that we read off the balancing weights via $\mV/h(\vX)\mW$ and $h(\vX)\succ \vzero$. As the optimal weights $\valpha^*\succeq\vzero$, the values $\mV$ should have the same sign as $\mW$ in an element-wise fashion. Therefore to enforce this, we include another multiplier layer to obtain $\mV$ from the outputs $\mO$, namely, $\mV=\mathrm{relu}(\mO\mW)$.

\textbf{Normalization.} As the optimal balancing weights is in $\sA=\{\vzero\preceq\valpha\preceq\vone,~ \sum_{i\in\sT}\alpha_i=\sum_{i\in\sC}\alpha_i=1\}$, we normalize the read-off balancing weights during inference. In particular, in Algorithm~\ref{alg:single-dataset} and Algorithm~\ref{alg:multi-dataset-inference}, after setting $\valpha^*=\lambda\cdot \mV/h(\mX)\mW$, we project it into $\sA$ by taking $\max(\valpha^*,\vzero)$ and normalizing the treated and control group to sum up to $1$.

\textbf{Hyper-parameters.} For both Algorithm~\ref{alg:single-dataset} and Algorithm~\ref{alg:multi-dataset}, we search for the optimal penalty $\lambda>0$ from range $[\lambda_{\min},\lambda_{\max}]$ by exponentially increasing it from $\lambda_{\min}$ to $\lambda_{\max}$. On the same dataset, this range remains the same for both algorithms (and all variations, if applicable). The following table summarizes the values of $\lambda_{\min}$ to $\lambda_{\max}$ for different datasets.
\begin{table}[ht]
    \caption{
    Search range for $\lambda$ in different datasets.
    }
    \label{tab:representations}
    \begin{center}
    \begin{tabular}{r|l|l}
        \textbf{Dataset}& \textbf{$\lambda_{\min}$} & $\lambda_{\max}$
        \\
        \hhline{===}
        Simulation A & 1e-6 & 1e-2
        \\
        Simulation B & 1e-6 & 1e-2
        \\
        IHDP & 1 & 1000
        \\
        IHDP-resmapled & 1e-5 & 1000
        \\
        Twins & 1e-8 & 1e-2
        \\
        LaLonde CPS & 1e-10 & 5e-6
        \\
        LaLonde PSID & 1e-10 & 5e-6
        \\
        ACIC & 1e-6 & 100
        \\
    \end{tabular}
    \end{center}
\end{table}

\textbf{Training and Evaluations.} For all the experiments, we use a cosine annealing schedule for the learning rate from $l_{\max}$ to $l_{\min}$ during the first half of the training epochs. Then the learning rate is fixed to $l_{\min}$ for the second half of the training epochs. The exact values of $l_{\max}$ and $l_{\min}$ for different datasets can be found in the codebase. For Algorithm~\ref{alg:single-dataset}, we train for $20,000$ epochs on all datasets. For Algorithm~\ref{alg:multi-dataset}, we train for $4,000$ epochs on all datasets.

For evaluating the results of Algorithm~\ref{alg:multi-dataset}, we choose the best hyper-parameters based on the mean absolute error on the validation sets of datasets and report the results on the testing sets of datasets. For evaluating the results of Algorithm~\ref{alg:single-dataset}, if the setting contains multiple datasets (Simulation A, Simulation B, IHDP-resampled, ACIC), we choose the best hyper-parameters based on the mean absolute error on the validation sets of datasets and report the results on the testing sets of datasets. Note that even though IHDP contains multiple datasets, they all share the same sets of covariates and treatments. Therefore we treat it the same as settings with one dataset for Algorithm~\ref{alg:single-dataset}. On these datasets (IHDP, Twins, LaLonde CPS, LaLonde PSID), we choose the best hyper-parameters based on the reported results.

\subsection{Baselines}

\textbf{IPW and Self-Normalized IPW.} For both IPW and self-normalized IPW, we first standardized the covariates $\mX$. Then we fit a random forest classifier on the data to predict propensity scores. The depth of the random forest classifier is chosen in the same way as the hyper-parameter $\lambda$ is chosen in CInA, which we described above.

\textbf{DML.} For DML, we use the implementation of \cite{battocchi2019econml}. In particular, we consider three models: \texttt{LinearDML}, \texttt{CausalForestDML}, \texttt{KernelDML}. Similar as above, when a validation set of datasets is present, we report the results based on the best of these three models in terms of validation MAE. Otherwise we report based on the best performance on the reported dataset. However, in simulation A, we only use \texttt{LinearDML} as the outcome model is linear.

\textbf{SVM.} For this baseline, we first standardized the covariates $\mX$. Then we solve the dual SVM problem in Eq.~(\ref{eq:dual-svm}), where the kernel is defined using $\phi$ given in Eq.~(\ref{eq:feature-func}) on the standardized data. We use the support vector classifier \cite{pedregosa2011scikit} with a precomputed kernel. The maximum number of iterations is capped with a hard limit of $50,000$. The reported results are based on $\lambda$ choosen in the same way as CInA described above.

\subsection{Dataset Augmentation}
In our experiments in Section~\ref{sec:sim_A} and certain datasets in Section~\ref{sec:real_exp} using the multi-dataset version of CInA, we implemented a new type of data augmentation. As we observe that the network can learn how to balance on a set of datasets using very few training steps, we propose to reshuffle amongst different datasets in every epoch. This essentially creates a ``new'' set of datasets by combining units from different datasets. Intuitively, this augments the number of covariate balancing problems that the model has to learn to solve without actually needing to acquire more data. {However, we note that this technique is only applied if different datasets from the same experiment share the same causal graph. If different datasets contain very different causal structures such as \textbf{ER-5000} in Section~\ref{sec:sim_ER} and \textbf{ACIC} in Section~\ref{sec:real_exp}, this shuffling is not used as it would create covariate balancing problem that does not aid learning. The main intuition is that if we reshuffle units among these datasets, units in a reshuffled dataset could follow different causal graphs, which means there is potentially no underlying causal structure that can explain the data. 
}

{
\section{Additional Empirical Results}\label{app:add-emp-result}

\subsection{Comparison to DragonNet and RieszNet}

\begin{table}[ht]  
    \caption{  
    ATE MAE comparison of different methods on the "Simulation-A", "ER-5000", and "IHDP" datasets.  
    }  
    \label{tab:neural_methods} 
    \begin{center}  
    \begin{tabular}{l|l|l|l}  
        \textbf{Method} & \textbf{Simulation-A} & \textbf{ER-5000} & \textbf{IHDP}  
        \\  
        \hhline{====}  
        Naive & {0.172 ± 0.03} & {50.27 ± 5.97} & {0.259 ± 0.01}  
        \\  
        IPW & {0.304 ± 0.03} & {27.42 ± 3.19} & {0.766 ± 0.02}  
        \\  
        Self-normalized IPW & {0.158 ± 0.03} & {49.99 ± 5.88} & {0.141 ± 0.00}  
        \\  
        DML & {0.094 ± 0.01} & {11.13 ± 3.17} & {0.585 ± 0.03}  
        \\  
        DragonNet & {0.386 ± 0.01} & {11.21 ± 3.17} & {0.146 ± 0.01}  
        \\  
        RieszNet & {\textbf{0.045 ± 0.01}} & {12.90 ± 4.54} & {\textbf{0.110 ± 0.01}}
        \\  
        SVM & {0.015 ± 0.00} & {11.09 ± 3.13} & {1.202 ± 0.05}  
        \\  
        Ours & {0.126 ± 0.02} & N/A & {\textbf{0.114 ± 0.01}}  
        \\  
        Ours (ZS) & {0.147 ± 0.01} & {11.50 ± 1.85} & N/A  
        \\  
        Ours (ZS-S) & N/A & {\textbf{2.66 ± 0.33}} & N/A  
        \\  
        Mean & N/A & {17.88 ± 1.83} & N/A  
        \\  
    \end{tabular}  
    \end{center}   
\end{table}  

In this section, we further compare two additional baselines, DragonNet \cite{shi2019adapting} and RieszNet \cite{chernozhukov2022riesznet}, both of which were considered strong neural estimation methods for per-dataset causal inference. Results for \textbf{IHDP} dataset were directly cited from \cite{shi2019adapting, chernozhukov2022riesznet}, following their best performing models. Furthermore, we also compare to \textbf{Simulation-A-Multi+OOD+diff\textunderscore size}, and \textbf{ER-5000}, both are the most general synthetic settings in \Cref{sec:5}. On \textbf{Simulation-A-Multi+OOD+diff\textunderscore size}, \emph{CINA (ZS)} outperforms \emph{DragonNet}, while \emph{RieszNet} outperforms both \emph{DragonNet} and \emph{CINA (ZS)} method. On both \textbf{ER-5000} and \textbf{IHDP}, \emph{CINA (ZS)} is on par with or outperforms \emph{DragonNet} and \emph{RieszNet}, while \emph{CINA (ZS-S)} massively outperforms the other methods on \textbf{ER-5000}.

\subsection{Larger scale experiments on 10k ACIC 2018, with cross-dataset generalization}
\begin{table}[ht] 
\caption{  
    Comparison of different methods on the 10k ACIC 2018 dataset.  
    }  
    \label{tab:acic_full}  
    \begin{center}  
    \begin{tabular}{l|l|l|l}  
        \textbf{Method} & \textbf{ATE MAE} & \textbf{Inference time on new data (s)} & \textbf{Pretraining time (s)}
        \\  
        \hhline{====}  
        Naive & {13.07 ± 8.25} & 0.005 & N/A
        \\  
        IPW & {10.29 ± 5.94} & 48.927  & N/A
        \\  
        Self-normalized IPW & {10.30 ± 5.90} & 49.322  & N/A
        \\  
        DML & {8.572 ± 8.96} & 7391.743  & N/A
        \\  
        RieszNet & {69.39 ± 31.9} & 8157.498 & N/A
        \\
        Ours (ZS) & {\textbf{1.460 ± 0.48}} & 78.503  & 1800
        \\  
        Ours (ZS-S) & {\textbf{1.361 ± 0.42}} & 77.546  & 1800
        \\  
        Ours (ZS-ER) & {1.718 ± 0.74} & 78.085  & 1800
        \\  
        Ours (ZS-S-ER) & {1.702 ± 0.74} & 77.947  & 1800
        \\  
    \end{tabular}  
    \end{center}  
\end{table}  

To demonstrate the performance of our method on larger version of \textbf{ACIC 2018}, we produce additional experiment using the 10k-size datasets of ACIC \cite{shimoni2018benchmarking}, which is a commonly used scale considered in the literature \cite{shi2019adapting, mahajan2022empirical}. Note that instead of only selecting a subset of datasets in \textbf{ACIC 2018} as in \cite{shi2019adapting, mahajan2022empirical}, we make use of all datasets of size 10k generated by \cite{shimoni2018benchmarking} that has polynomial link functions as training datasets, and all datasets of size 10k with exponential link functions as test datasets. 

In this setting, we also compare two new variants of our method, \emph{CINA (ZS-ER)} and \emph{CINA (ZS-S-ER)}, that are fully trained on a larger-scale, 200-dimensional ER-5000 dataset \Cref{sec:sim_ER} under both unsupervised and supervised settings, respectively. After pre-training, \emph{ CINA (ZS-ER)} and \emph{CINA (ZS-S-ER)} are applied directly to all ACIC 2018 test sets. This will help us to demonstrate whether the model can show generalization ability across datasets. All CINA-related methods are trained for a fixed time budget (1800 seconds), which is significantly shorter than the full training time of DML and RieszNet. As shown in Table~\ref{tab:neural_methods}, both \emph{CINA (ZS)} and \emph{CINA (ZS-S)} significantly outperforms all baselines.} The \emph{CINA (ZS-ER)} and \emph{CINA (ZS-S-ER)} methods give marginally worse performance than \emph{CINA (ZS)} and \emph{CINA (ZS-S)}, but still out-performs the other baselines by a clear margin.

%%%%%%%%%%%%%%%%%%%%%%%%%%%%%%%%%%%%%%%%%%%%%%%%%%%%%%%%%%%%%%%%%%%%%%%%%%%%%%%
%%%%%%%%%%%%%%%%%%%%%%%%%%%%%%%%%%%%%%%%%%%%%%%%%%%%%%%%%%%%%%%%%%%%%%%%%%%%%%%

\end{document}